\documentclass{article}

\PassOptionsToPackage{numbers, compress}{natbib}

\usepackage[preprint]{neurips_2026}
\usepackage[pagebackref=true,colorlinks,linkcolor=blue,citecolor=brown]{hyperref}

\usepackage[utf8]{inputenc} 
\usepackage[T1]{fontenc}    
\usepackage{url}            
\usepackage{booktabs}       
\usepackage{amsfonts}       
\usepackage{nicefrac}       
\usepackage{microtype}      
\usepackage{xcolor}         
\usepackage{amsmath}
\usepackage{amssymb}
\usepackage{amsthm}
\usepackage{algorithm}
\usepackage{algpseudocode}
\usepackage{graphicx}
\usepackage{tabularx}
\usepackage{listings}
\usepackage{etoc}
\etocsettagdepth{mainpaper}{none}
\etocsettagdepth{appendix}{subsection}
\usepackage{thm-restate}
\usepackage{wrapfig}
\usepackage{forest} 
\usepackage{pifont}
\usepackage{multirow}
\usepackage[table]{xcolor}
\definecolor{cellbase}{HTML}{F9F9F9}
\definecolor{tabband}{HTML}{F2F7FB}
\newtheorem{theorem}{Theorem}[section]
\newtheorem{lemma}[theorem]{Lemma}

\newtheorem{corollary}[theorem]{Corollary}

\newtheorem{assumption}[theorem]{Assumption}
\newcommand{\ours}{\texttt{BES}}

\usepackage{xcolor}
\definecolor{darkerlogocolor}{RGB}{20, 0, 145}
\usepackage{listings}
\usepackage[most]{tcolorbox}
\newtcolorbox{ttcolorbox}[1][]{
  colframe=darkerlogocolor,
  colback=darkerlogocolor!4!white,
  title=#1,
  breakable,
  enhanced jigsaw,
}

\lstdefinestyle{prompt}{
  basicstyle=\small\ttfamily,
  breaklines=true,
  breakatwhitespace=false,
  columns=fullflexible,
  keepspaces=true,
  showstringspaces=false,
  upquote=true,
  xleftmargin=0pt,
  xrightmargin=0pt,
}

\title{Self-Improving Language Models with \\ Bidirectional Evolutionary Search}

\author{%
  Guowei Xu\textsuperscript{1} \quad
  Zhenting Qi\textsuperscript{1} \quad
  Huangyuan Su\textsuperscript{1} \quad
  Weirui Ye\textsuperscript{2} \\[2pt]
  \textbf{Himabindu Lakkaraju\textsuperscript{1}} \quad
  \textbf{Sham M.\ Kakade\textsuperscript{1}} \quad
  \textbf{Yilun Du\textsuperscript{1}} \\[6pt]
  \textnormal{\textsuperscript{1}Harvard University \qquad
  \textsuperscript{2}MIT}
}

\begin{document}

\maketitle

\etocdepthtag.toc{mainpaper}

\begin{abstract}
  Search has been proposed as an effective method for self-improving language models and agentic systems, both for post-training sample generation and for inference. However, widely used methods such as best-of-N sampling and tree search face two fundamental limitations: they are guided by sparse verification signals, and they construct candidates primarily through autoregressive expansion, restricting exploration to regions with substantial model probability mass. To address these, we propose Bidirectional Evolutionary Search (\ours), a search framework that couples forward candidate evolution with backward goal decomposition. In the forward search, \ours\ augments standard expansion with evolution operators that recombine partial trajectories to generate candidates that are difficult to obtain from a single model rollout. In the backward search, \ours\ recursively decomposes the original task into checkable sub-goals, producing dense intermediate feedback that guides forward search. We provide theoretical motivation showing that candidates generated by expansion-only search are confined to a narrow entropy shell while evolutionary operators can escape it, and that backward search can exponentially reduce the number of required samples to find a correct answer. Experiments show that on challenging post-training tasks where mainstream post-training algorithms fail to improve, \ours\ enables consistent gains, and on three open problem solving benchmarks at inference time, \ours\ outperforms existing open-source frameworks in both average and best-case performance. Code and trained models are available at \url{https://github.com/Embodied-Minds-Lab/BES}.
\end{abstract}
\section{Introduction}
Large language models (LLMs) and agentic systems have demonstrated remarkable capabilities on complex reasoning problems~\citep{cot, rstar, dsr1}. They can even solve open problems across mathematical and scientific domains~\citep{alphaevolve,alphaproof} and surpass the best human performance on tasks such as code generation~\citep{alphacode, tttdiscover}. In this context, the question of how to do better sampling from LLMs and agentic systems is of critical importance~\citep{rstarmath}. This is particularly significant for problems at the frontier of model capability, where naive sampling methods may require too many samples to obtain a correct answer or may simply fail~\citep{frontiermath,rlincentivize}. At training time, higher-quality samples enable more effective post-training and self-improvement~\citep{star,selfreward}; at inference time, they serve as a natural mechanism for test-time scaling~\citep{selftaught, wu2025inference, shinkaevolve}, which can further push the boundary of what models can achieve.

Currently, the two dominant sampling methods in post-training, self-improvement, and inference for LLMs and agentic systems are best-of-N sampling and tree search. Best-of-N sampling is simple and efficient. For problems of moderate difficulty, it typically suffices to find high-quality responses and is therefore widely adopted in post-training algorithms such as GRPO~\citep{dsr1} and its variants~\citep{dapo}, while also serving as a strong baseline for inference. Tree search methods such as beam search and Monte Carlo Tree Search~\citep{mcts} can discover better responses more sample-efficiently than best-of-N on harder problems. For example, Tree-GRPO leverages tree search for sample generation during post-training~\citep{treegrpo}, and Tree of Thoughts explores multiple reasoning paths at inference time~\citep{tot}.

However, both methods share two fundamental limitations.  (1) The verification signal to guide the search is sparse. Effective search depends critically on the accuracy and granularity of the verifier, yet in common settings such as RLVR post-training, verifiers typically provide only binary or coarse-grained feedback. (2) They struggle to generate candidates beyond the model's own distribution. They construct candidates by auto-regressively extending responses. This confines candidates to the support of the model's own distribution~\citep{llmmonkey}, making it difficult to reach low-probability regions where correct solutions often reside on hard problems.

\begin{figure}[t]
\centering
\vspace{-1em}
\includegraphics[width=\linewidth]{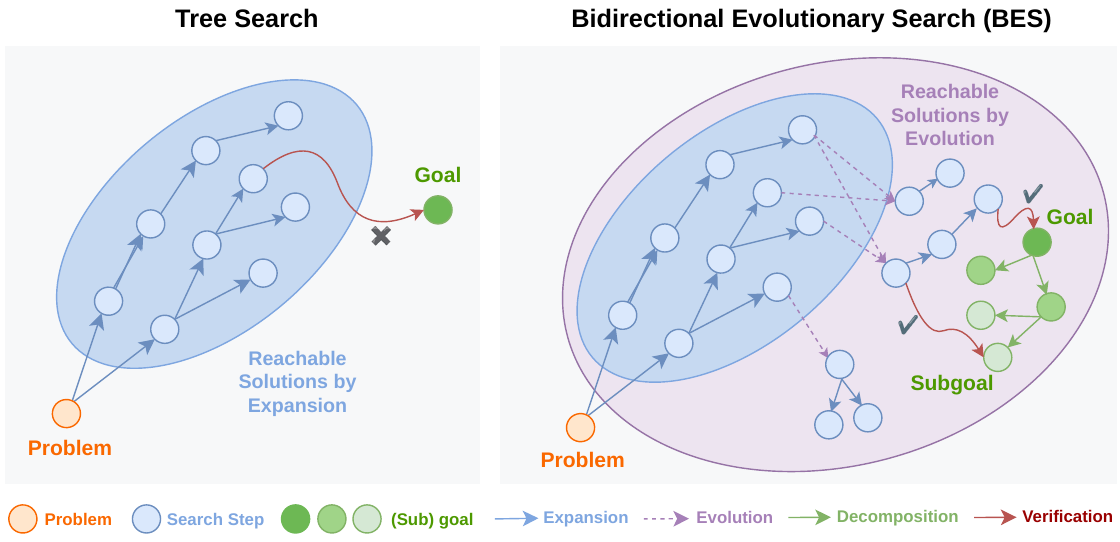}
\vspace{-1em}
\caption{Comparison of tree search and Bidirectional Evolutionary Search (\ours). \textbf{Left:} Tree search constructs candidates by sequentially expanding steps. We prove that all such candidates are confined to a narrow entropy shell (Theorem~\ref{thm:shell}a), limiting exploration to a small region of the solution space. \textbf{Right:} \ours\ escapes this shell through evolution operators that recombine
parts of different trajectories, with backward search decomposing the
problem into verifiable sub-goals that provide dense feedback to guide
the forward search toward the final goal. \textcolor{green!60!black}{\checkmark} and {\color{red!70!black}$\boldsymbol{\times}$} indicate whether a candidate satisfies or fails the (sub-)goal, respectively.}
\vspace{-2em}
\label{fig:teaser}
\end{figure}

To address these two limitations, we propose  Bidirectional Evolutionary Search (\ours). First, to tackle the sparsity of verification signals, we introduce bidirectional search: the forward search seeks better candidate solutions, while the backward search discovers finer-grained sub-goals to verify them.
Second, to generate candidates beyond the model's own distribution, we draw inspiration from evolutionary biology. 
For much of the history of life, organisms reproduced asexually. Each offspring was a direct extension of its parent, and beneficial mutations arising independently in different individuals could never be combined~\citep{naturalselection}. Sexual reproduction fundamentally changed this through chromosomal recombination: gene segments from different lineages are spliced together to produce novel combinations that neither parent possessed~\citep{sex}. Analogously, in our setting, rather than only extending responses auto-regressively, we introduce four evolution operators: combination, translocation, deletion, and crossover. Combination, translocation, and crossover merge the strengths of two distinct responses in different ways to construct a new candidate, while deletion removes the least sound segment from a response. We theoretically prove that responses generated by expansion-only search are confined to a narrow entropy shell, while evolution operators can escape it.

We evaluate \ours\ on both post-training and inference across LLM and agent settings. For post-training, we test on challenging logical reasoning and multi-hop reasoning tasks where mainstream post-training algorithms such as GRPO~\citep{dsr1}, MaxRL~\citep{maxrl}, and Tree-GRPO~\citep{treegrpo} struggle to find sufficient high-quality training samples and consequently fail to improve or even degrade from the base model. In contrast, \ours\ consistently discovers effective training samples, enabling meaningful improvements where these baselines nearly fail. For inference, we test on three open problem solving benchmarks, where \ours\ discovers more stable and higher-quality solutions compared to all existing open-source frameworks, including OpenEvolve~\citep{openevolve}, GEPA~\citep{gepa}, and ShinkaEvolve~\citep{shinkaevolve}. We also provide ablation studies validating the contribution of each component, case studies visualizing our search process, and cost analysis comparing the wall-clock time and API cost of \ours\ against baselines.

\section{Preliminaries}
\label{sec:problem}

We consider reasoning problems of the form
$
  \mathcal{T} = (x, V),
$
where $x$ is a problem description (e.g.\ a math question) and
$V(x,y) \in [0,1]$ is a \emph{verifier} that
assigns a score measuring how well a trajectory $y$
solves the problem $x$.

Given a policy $\pi_\theta(\cdot\mid x)$ (e.g.\ an LLM that generates reasoning traces, or an agent that interacts with an environment through a sequence of actions), our goal is to produce a terminal response $y$ that maximizes the verifier score. Let
$\mathcal{Y}_{\mathrm{term}}(x)$ denote the set of valid terminal responses
under the task specification and resource budget. We write the target as
\begin{equation}
  y^{\star}(x) \;\in\; \operatorname*{arg\,max}_{y\in\mathcal{Y}_{\mathrm{term}}(x)}
                       V(x, y).
  \label{eq:obj}
\end{equation}
Finding $y^{\star}$ is important in both training and inference. During
training, high-quality candidates can facilitate post-training or
iterative self-improvement. During inference, $y^{\star}$ is the response
we seek.

The challenge is that the maximization in Eq.~\eqref{eq:obj} is intractable:
on hard problems, the probability mass that $\pi_\theta$ assigns to correct
trajectories can be extremely small. Practical algorithms therefore
approximate $y^{\star}$ by searching over a set of candidates. We 
introduce two common methods below.

\textbf{Best-of-$N$ sampling.}
The simplest approach is to draw $N$ independent trajectories and return the one with the highest verifier score. Formally, let $y^{(1)}, \dots, y^{(N)} \overset{\text{i.i.d.}}{\sim} \pi_\theta(\cdot \mid x)$. The method returns $ y^{\mathrm{BoN}} \;=\; \operatorname*{arg\,max}_{y \in \{y^{(1)}, \dots, y^{(N)}\}} V(x, y) $.
Best-of-$N$ is parallel and requires no structural knowledge of the problem. Its effectiveness, however, is bounded by the finite-sample coverage of $\pi_\theta$: all $N$ trajectories are drawn from the same distribution, so if the optimal trajectory lies in a region with very small policy mass, increasing $N$ gives only linear coverage improvement.

\textbf{Tree search.}
Tree-search methods exploit the sequential structure of a trajectory. A trajectory
is decomposed into steps (e.g.\ tokens, reasoning segments, or agent actions), and the
search maintains a tree whose nodes are partial trajectories and whose edges
correspond to appending a step. Branches are selected and extended according
to a heuristic value so that compute is concentrated on prefixes that look
promising. Classic methods include beam search, best-first search, and Monte
Carlo Tree Search~\citep{mcts}; recent work applies these ideas to LLM and agent reasoning~\citep{tot,treegrpo}. Tree search can be more sample-efficient than best-of-$N$
when the per-step signal is informative, but it still materializes terminal
candidates one lineage at a time through sequential extension.

\section{\ours: Bidirectional Evolutionary Search}
\label{sec:method}

 \ours\ performs a bidirectional evolutionary search that alternates
between two coupled processes: a \emph{forward} search that seeks
better candidates, and a \emph{backward} search that decomposes
the problem into fine-grained sub-goals to evaluate each forward
node. The forward search not only extends trajectories but also
recombines parts of different candidates, producing solutions that
no single rollout from $\pi_\theta$ would likely reach. The backward
search provides dense and interpretable scores for partial
trajectories, guiding the forward search toward promising
candidates. In practice, one backward search step is performed after every
several forward search steps. The full pseudocode is given in
Appendix~\ref{app:algo}, and a case study illustrating the search
process is provided in Appendix~\ref{app:case}.

\subsection{Forward Search: Expanding the Reachable Solution Space}
\label{sec:forward}

\begin{figure}[t]
\centering
\includegraphics[width=\linewidth]{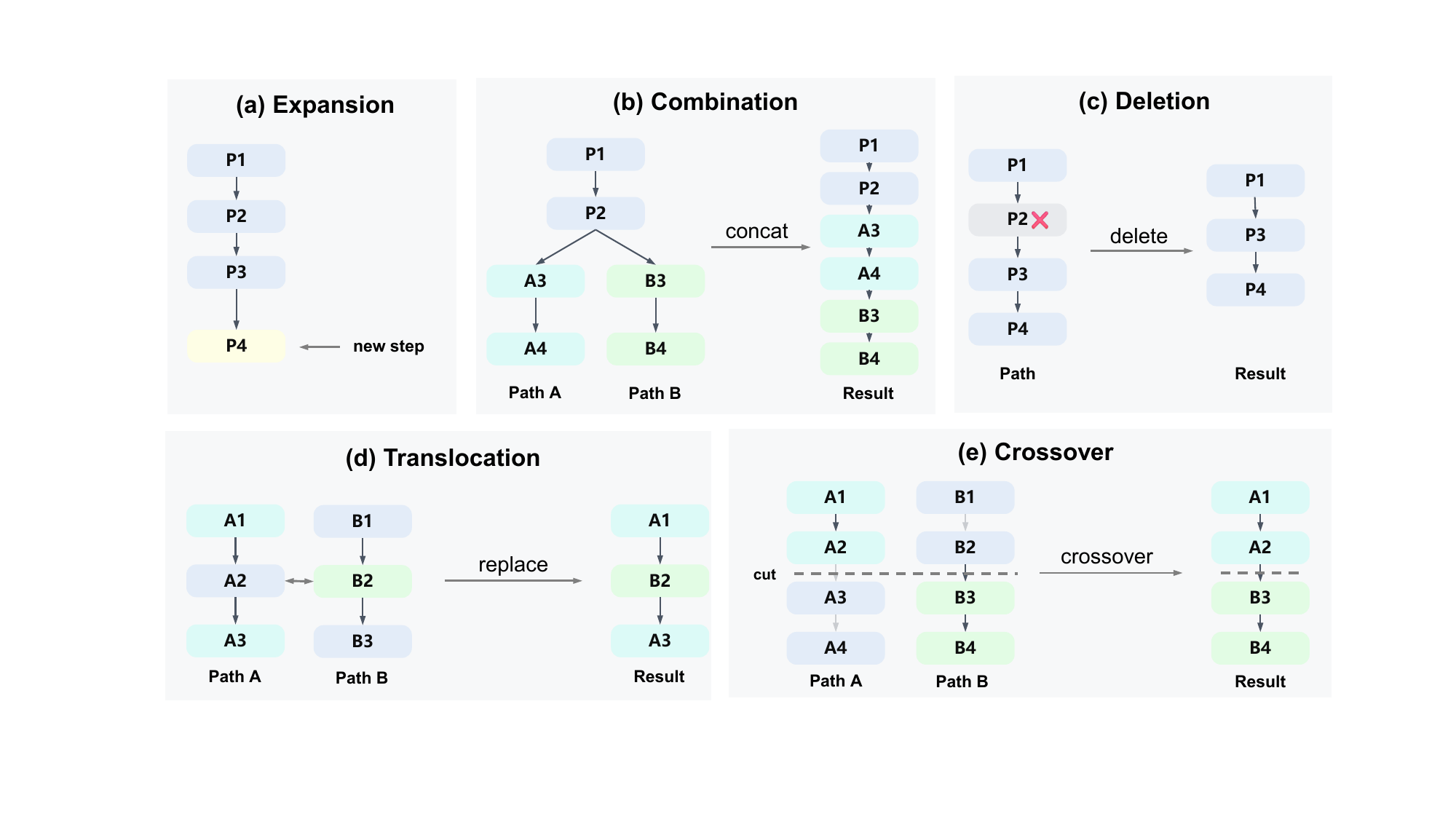}
\caption{Forward search operators. \textbf{(a) Expansion:} the policy generates new steps (yellow). \textbf{(b) Combination:} two trajectories sharing a common prefix (P1--P2) have their distinct suffixes concatenated into a single candidate. \textbf{(c) Deletion:} an interior step (P2) is removed. \textbf{(d) Translocation:} one step in Path A (A2) is replaced by a step from Path B (B2). \textbf{(e) Crossover:} Path A is cut at a splice point and its tail is replaced by the tail of Path B. }
\vspace{-1em}
\label{fig:forward}
\end{figure}

We represent each candidate partial trajectory as a node
$n=(y_1,\dots,y_{t})$, where $y_i$ denotes the $i$-th step (e.g.\ a reasoning segment or an action). The search maintains a candidate set $\mathcal{P}$ of such nodes, and at
each search step applies one of two types of operators, \emph{expansion}
or \emph{evolution}, to produce a child node $n'$, which is scored by
the backward search (Section~\ref{sec:backward}) and added to $\mathcal{P}$.

\textbf{Expansion.}
Expansion extends a parent node by sampling new steps. Given
$n=(y_1,\dots,y_{t})$, we sample a step count
$K\sim\mathrm{Uniform}\{1,\dots,K_{\max}\}$ and draw up to $K$ new steps from
$\pi_\theta$:
\begin{equation}
  y_{t+k}\,\sim\,\pi_\theta\bigl(\cdot\mid x\oplus y_1\oplus\cdots\oplus y_{t+k-1}\bigr),
  \qquad k=1,\dots,K,
  \label{eq:expand}
\end{equation}

\textbf{Evolution.} 
 A key limitation of expansion alone is that each candidate
is built by sequentially extending a single trajectory,
so it cannot combine useful parts from different candidates.
Evolution operators overcome this by editing and recombining
existing trajectories. As illustrated in Figure~\ref{fig:forward}, we define four operators inspired by biological evolution: \textit{(i) Combination} merges two trajectories by concatenating their suffixes beyond a shared prefix; \textit{(ii) Deletion} removes one interior step, producing a shorter candidate; \textit{(iii) Translocation} transplants a single step from one trajectory into another; and \textit{(iv) Crossover} splices the prefix of one trajectory onto the tail of another. Formal definitions are provided in Appendix~\ref{app:operators}. 
Together, these four evolution operators allow the search to restructure, condense, and recombine existing trajectories. As with mutations in nature, evolution operations are not guaranteed to produce better candidates. However, the value of evolution lies in generating diverse candidates: \textbf{\textit{even if only a small fraction turn out to be improvements, that is sufficient for the search to make progress.}}
Here, the four evolution operators are defined as direct edits on the step sequence. In settings where direct concatenation is not well-defined, the evolution operators can alternatively be implemented by prompting the policy model.

At each forward search step, we select among the operators with fixed
probabilities. All operators require selecting one or two parent nodes from
$\mathcal{P}$. Let $\mathcal{C}_t$ be the set of eligible non-terminal members of
$\mathcal{P}$ at step $t$. For single-parent operators (expansion, deletion), we sample the parent from a Boltzmann distribution over backward score $s(n)$ (Eq.~\ref{eq:score}):
\begin{equation}
  \Pr\bigl[n\mid\mathcal{C}_t\bigr]
    \;=\;\frac{\exp(\tilde{s}(n)/\tau_t)}
              {\sum_{n'\in\mathcal{C}_t}\exp(\tilde{s}(n')/\tau_t)},
  \qquad
  \tilde{s}(n) \;=\; s(n) + \lambda\cdot\mathbf{1}\!\bigl[\deg(n)=0\bigr],
  \label{eq:boltz}
\end{equation}
where $\tau_t>0$, $\deg(n)$ is the number of children of $n$ in
$\mathcal{P}$, and the indicator term adds a small constant bonus
$\lambda$ (we use $\lambda=0.1$) to candidates that have not yet been
selected as parents, giving unexplored nodes a higher chance of being expanded.

For two-parent operators (combination, translocation, crossover), we select a pair of parent nodes $(n_a, n_b)$ from $\mathcal{C}_t$.
We calculate a pair score $s(n_a, n_b)$ (Eq.~\ref{eq:pair_score}) that favors complementary parents whose strengths cover different parts of the problem.
The pair is then drawn from the analogous Boltzmann distribution:
\begin{equation}
  \Pr[(n_a, n_b) \mid \mathcal{C}_t]
   \;=\;\frac{\exp(s(n_a, n_b)/\tau_t)}
              {\sum_{n_a, n_b\in\mathcal{C}_t}\exp(s(n_a, n_b)/\tau_t)}.
        \label{eq:bi_boltz}
\end{equation}
The temperature $\tau_t$ is annealed linearly from an initial value
$\tau_0$ to a final value $\tau_{\mathrm{end}}<\tau_0$ over the search
budget, gradually shifting from exploration to exploitation.

\subsection{Backward Search: Better Verification through Goal Decomposition}
\label{sec:backward}

While the verifier $V$ provides a score for each node in the forward search, this signal is relatively sparse. Backward search addresses this by decomposing
the problem into a tree of fine-grained sub-goals. Each forward
node is then scored against this tree: the more sub-goals a
candidate trajectory has addressed, the higher its score.
This gives the forward search a dense, informative signal to
select promising candidates, even when none of them have fully
solved the problem yet. 

Below we describe how the backward search is constructed.
Starting from the top-level goal $g_{\texttt{root}}$ (i.e.\ solving the
entire problem), the policy $\pi_\theta$ is prompted
to break each goal into finer sub-goals, producing a rooted backward goal tree. Each goal $g$ on the tree can be recursively split into children
$\mathrm{ch}(g)$ (finer sub-goals), and every $g$ comes with a verifier
$V_g(x,n) \in [0,1]$ that tests
how well a candidate node $n$ addresses the sub-goal $g$ on
problem $x$. For the top-level goal, $V_{g_{\texttt{root}}} = V$ (the verifier of the original problem). 
This decomposition is re-invoked every $K$ forward
search steps: at each invocation, we select a leaf sub-goal that no current candidate fully satisfies and prompt $\pi_\theta$ to split it into finer sub-goals. After that, all existing forward nodes are re-scored. 
The verifiers are task-dependent and can be instantiated
as rule-based checkers, test-case code executors, embedding similarity
models, or LLM judgers. We describe the verifiers used in each experiment
in Appendix~\ref{app:setup}.

For example, consider the problem ``Compute
$\frac{(4+6)\times 3}{2} - 5$.'' The backward search
can produce the following goal tree:

\begin{center}
\begin{forest}
  for tree={
    grow'=0,
    anchor=west,
    child anchor=west,
    parent anchor=east,
    l sep=12mm,
    s sep=4mm,
    edge={->,>=latex},
    font=\small,
  }
  [$g_{\texttt{root}}$: compute $\frac{(4+6)\times 3}{2} - 5$
    [$g_1$: compute $(4+6)\times 3$
      [$g_{1.1}$: compute $4+6$]
      [$g_{1.2}$: \#1 multiply by $3$]
    ]
    [$g_2$: \#1 divide by $2$]
    [$g_3$: \#2 subtract $5$]
  ]
\end{forest}
\end{center}

For a candidate node $n$ from the forward search
(Section~\ref{sec:forward}) and a sub-goal $g$, define the
sub-goal score
\begin{equation}
  s(n,g) \;=\; \alpha\cdot V_g(x,n)
    \;+\;(1-\alpha)\cdot\frac{1}{|\mathrm{ch}(g)|}
    \sum_{g'\in\mathrm{ch}(g)} s(n,g'),
  \label{eq:score}
\end{equation}
where $\alpha\in[0,1]$ balances the contribution of coarser parent goals and
finer-grained sub-goals to the overall score. For leaf sub-goals ($\mathrm{ch}(g)=\emptyset$),
we set $s(n,g)=V_g(x,n)$. If a goal is already fully
satisfied ($V_g(x,n)=1$), we short-circuit to $s(n,g)=1$ without
evaluating its children. 
The overall score is $s(n)\triangleq s(n,g_{\texttt{root}})$.

For two candidate nodes $n_a$ and $n_b$, we further define a pair score that measures their joint coverage of the goal tree.
Replacing each sub-goal verifier with the maximum of the two
parents' scores gives
\begin{equation}
  s(n_a,n_b,g) \;=\; \alpha\cdot \max\{V_g(x,n_a),\, V_g(x,n_b)\}
    \;+\;(1-\alpha)\cdot\frac{1}{|\mathrm{ch}(g)|}
    \sum_{g'\in\mathrm{ch}(g)} s(n_a,n_b,g'),
  \label{eq:pair_score}
\end{equation}
with $s(n_a,n_b)\triangleq s(n_a,n_b,g_{\texttt{root}})$. A sub-goal is
considered covered if either parent addresses it, so the pair
score favors complementary parents that cover different parts
of the goal tree.

Applying this procedure to every node in the forward search
tree yields a backward score for each node and a pair
score for each pair of nodes, which directly drive node
selection in the forward search.

\subsection{Using \ours\ for Post-Training and Inference}
\label{sec:usage}

\textbf{Post-Training.}
\ours\ replaces the sample-generation stage of post-training. Given a training problem, \ours\ returns a set of high-quality trajectories from the forward search candidate set $\mathcal{P}$. These trajectories are then used as training data for post-training algorithms, replacing the candidates that would otherwise be obtained from i.i.d.\ rollouts.
Because \ours\ can find correct solutions that single rollouts
rarely reach, it provides stronger training signal, especially
on hard problems.

\textbf{Inference.}
At inference time, \ours\ runs on open problems with a
fixed compute budget, and the terminal trajectory with the
highest score is returned as the final answer. 
\section{Theoretical Motivations}

\subsection{Theoretical Motivation for Evolution Operators}
\label{sec:theory-mutation}

We justify why evolution operators provide a fundamental advantage over expansion-only search. Specifically, we prove that expansion-only candidates are confined to a narrow entropy shell, while evolution operators can escape it. The full proof is given in Appendix~\ref{app:theory-mutation}.

Fix a task $x$ and a step horizon $T$. A trajectory $Y = (y_1, \dots, y_T)$ is generated by the policy with probability $P(Y) = \prod_{t=1}^T P(y_t \mid y_{<t})$. Let $H_T = H_P(Y)$ denote the trajectory-level entropy. We partition the trajectory into $k$ contiguous blocks $U_1, \dots, U_k$. We make three assumptions on the policy. All of them are naturally satisfied in practice; see Appendix~\ref{app:assumptions} for a detailed discussion.

\begin{assumption}[Bounded per-step surprise]
\label{assump:bounded}
There exists $L<\infty$ such that for every reachable prefix and every step with positive probability, $-\log P(v\mid y_{<t})\le L$.
\end{assumption}

\begin{assumption}[Decaying step dependence]
\label{assump:decay}
There exists a summable nonnegative sequence $(\beta_\ell)_{\ell\ge1}$ with
$C_\beta:=\sum_{\ell\ge1}\beta_\ell<\infty$ such that for every $s<t$,
prefix $y_{<s}$, and two steps $v,v'$ at position $s$,
\begin{align}
&\left|
\mathbb{E}\!\big[h_t(Y_{<t})\mid Y_{<s}=y_{<s},Y_s=v\big]
-
\mathbb{E}\!\big[h_t(Y_{<t})\mid Y_{<s}=y_{<s},Y_s=v'\big]
\right|
\le \beta_{t-s}.
\end{align}
\end{assumption}

\begin{assumption}[Linear block total correlation]
\label{assump:tc}
We partition the trajectory into $k \geq 2$ contiguous blocks $U_1, \dots, U_k$ at fixed boundaries $0 = s_0 < s_1 < \cdots < s_k = T$. The block total correlation grows linearly in the horizon:
$\sum_{j=1}^k H_P(U_j) - H_P(U_1, \ldots, U_k) \geq \gamma T$
for some constant $\gamma > 0$.  
\end{assumption}

\begin{theorem}[Shell confinement and escape]
\label{thm:shell}
Under Assumptions~\ref{assump:bounded}--\ref{assump:tc}, define the typical set $A_\epsilon^{(T)}:=\{y:|-\log P(y)-H_T|\le\epsilon T\}$.

\textit{(a) Shell confinement.} Every trajectory $Y \sim P$ produced by expansion-only search satisfies $\Pr[Y \notin A_\epsilon^{(T)}] \leq \exp(-\Omega(T))$. That is, search is confined to a typical set of size at most $\exp(H_T + \epsilon T)$.

\textit{(b) Shell escape.} Let $Q = \bigotimes_{j=1}^k P_j$ be the $k$-time evolution distribution. For any $\epsilon < \gamma$,
\begin{equation}
  \mathbb{E}_Q[-\log P(\widetilde{Y})] \;\geq\; H_T + \gamma T \;>\; H_T + \epsilon T,
  \label{eq:escape-expectation}
\end{equation}
so evolution candidates have expected log-probability strictly beyond the shell boundary. Moreover,
\begin{equation}
  \Pr_Q\!\left[\widetilde{Y} \in A_\epsilon^{(T)}\right] \;\leq\; 1 - \frac{(\gamma - \epsilon)T}{LT - H_T - \epsilon T} \;<\; 1,
  \label{eq:escape-prob}
\end{equation}
confirming that a positive fraction of evolution candidates escape the shell.
\end{theorem}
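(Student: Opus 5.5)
Part (a) is a concentration statement for the log-likelihood, and I would prove it with a martingale argument. Define the per-step surprise $Z_t=-\log P(Y_t\mid Y_{<t})$ and the conditional entropy $h_t(y_{<t})=\mathbb{E}[Z_t\mid Y_{<t}=y_{<t}]$. Then
\[
-\log P(Y)-H_T=\sum_{t}\bigl(Z_t-h_t(Y_{<t})\bigr)+\Bigl(\sum_t h_t(Y_{<t})-H_T\Bigr).
\]
The first sum is a martingale with increments in $[-L,L]$ by Assumption~\ref{assump:bounded}. Azuma--Hoeffding therefore bounds the probability that it exceeds $\epsilon T/2$ by $2\exp(-\epsilon^2T/(8L^2))$.

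For the second term, note that $\mathbb{E}[\sum_t h_t]=H_T$ by the chain rule. I would write $\sum_t h_t-H_T$ as a Doob martingale $\sum_s D_s$ with $D_s=\mathbb{E}[\sum_t h_t\mid Y_{\le s}]-\mathbb{E}[\sum_t h_t\mid Y_{<s}]$. Changing $Y_s$ affects $\mathbb{E}[h_t\mid\cdot]$ by at most $\beta_{t-s}$ for $t>s$ (Assumption~\ref{assump:decay}). The term $t=s$ is measurable w.r.t.\ $Y_{<s}$ and contributes nothing, and terms with $t<s$ are fixed. Hence $|D_s|\le C_\beta$, and Azuma gives a bound of $2\exp(-\epsilon^2T/(8C_\beta^2))$.

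A union bound then yields $\Pr[Y\notin A_\epsilon^{(T)}]\le\exp(-\Omega(T))$. The size bound $|A_\epsilon^{(T)}|\le\exp(H_T+\epsilon T)$ is the standard one: each element of the set has $P(y)\ge e^{-H_T-\epsilon T}$ and the total mass is at most $1$. I expect this bounded-differences step for the compensator to be the main technical obstacle. The care needed is in checking that conditioning on a longer prefix only propagates through the $\beta$ coefficients, that the sum of influences is uniformly bounded by $C_\beta$ rather than growing with $T$, and that the expectations telescope correctly.

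For (b), the key identity is cross-entropy decomposition. With $Q=\bigotimes_j P_j$ (the product of block marginals),
\[
\mathbb{E}_Q[-\log P(\widetilde Y)]=H(Q)+D(Q\|P).
\]
Here $H(Q)=\sum_j H_P(U_j)$, so the right-hand side is $H_T+\mathrm{TC}+D(Q\|P)\ge H_T+\gamma T$ by Assumption~\ref{assump:tc} and nonnegativity of KL. Because $\epsilon<\gamma$, this strictly exceeds $H_T+\epsilon T$.

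One caveat is that $Q$ may put mass on trajectories with $P(\widetilde Y)=0$. In that case the expectation is $+\infty$, which only helps the inequality. For the probability bound, however, I would restrict to the (assumed) case where $Q\ll P$, or treat zero-probability trajectories as lying outside $A_\epsilon$.

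For the probability bound, set $W=-\log P(\widetilde Y)$. On $Q$-support inside $P$-support, Assumption~\ref{assump:bounded} gives $W\le LT$. If $\widetilde Y\in A_\epsilon$, then $W\le H_T+\epsilon T$. Let $p=\Pr_Q[\widetilde Y\in A_\epsilon]$ and apply a reverse Markov argument:
\[
H_T+\gamma T\le\mathbb{E}_Q W\le p(H_T+\epsilon T)+(1-p)LT.
\]
Rearranging gives $1-p\ge (\gamma-\epsilon)T/(LT-H_T-\epsilon T)$, which is exactly \eqref{eq:escape-prob}. This ratio is positive because $\gamma>\epsilon$ and $LT\ge \mathbb{E}_QW>H_T+\epsilon T$, so the bound is strictly less than $1$.
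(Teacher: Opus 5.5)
Your proposal is correct and follows the paper's proof essentially step for step. Both use the same split of $-\log P(Y)-H_T$ into the martingale $\sum_t (Z_t-h_t(Y_{<t}))$ plus a compensator controlled by a Doob martingale with increments at most $C_\beta$. The paper applies Freedman rather than Azuma to the first term, but with its fallback $v_T=TL^2$ the $\exp(-\Omega(T))$ rate is the same. Both also use the same mass-counting bound on $|A_\epsilon^{(T)}|$. For (b), both use the identity $\mathbb{E}_Q[-\log P(\widetilde Y)]=H(Q)+D_{\mathrm{KL}}(Q\|P)=H_T+\mathrm{TC}_P+D_{\mathrm{KL}}(Q\|P)$ followed by the same reverse-Markov rearrangement.

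Your absolute-continuity caveat is a genuine point that the paper glosses over; it simply asserts $-\log P(\widetilde Y)\in[0,LT]$. However, only your first option, assuming $Q\ll P$, actually supports the rearrangement. Declaring null trajectories to lie outside $A_\epsilon^{(T)}$ does not help, because your step $\mathbb{E}_Q W\le p(H_T+\epsilon T)+(1-p)LT$ needs $W\le LT$ $Q$-almost surely. Without it the probability bound can fail. For example, take $T=k=2$, $Y_1$ uniform on $M$ values, $Y_2=Y_1$, and $\gamma T=\mathrm{TC}_P=\log M$: the right-hand side is $0$, while $\Pr_Q[\widetilde Y\in A_\epsilon^{(T)}]=1/M$.
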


\textit{Proof sketch.}
We first establish that, with high probability, every policy rollout has log-probability close to $H_T$, confining expansion-only search to a thin entropy shell. We then show that evolution operators break inter-block dependence and result in increased surprise (Lemma~\ref{lem:splice}). Generalizing to $k$-time evolution, the expected surprise exceeds $H_T$ by the block total correlation (Lemma~\ref{lem:tc}), which is $\Omega(T)$ under Assumption~\ref{assump:tc}, pushing evolution candidates outside the shell.

\subsection{Theoretical Motivation for Bidirectional Search}
\label{sec:theory-bidirectional}

The previous section shows that evolution operators can construct candidates beyond the policy's entropy shell. We now show that backward search makes this enlarged space efficiently searchable. The full proof is given in Appendix~\ref{app:theory-bidirectional}.

Let the backward search produce $m$ leaf sub-goals $\{g_1,\ldots,g_m\}$. For a candidate trajectory $n$, define $C_i(n)=\mathbf{1}\{V_{g_i}(x,n)=1\}$. We assume that terminal success requires all leaf sub-goals: $V(x,n)=1 \Rightarrow C_i(n)=1,\ \forall i\in[m]$. For simplicity, suppose that for a fresh candidate $n$, the events $\{C_i(n)=1\}$ are independent with probabilities $\Pr[C_i(n)=1]=p_i$.

\begin{theorem}[Exponential advantage from backward sub-goal signals]
\label{thm:bidir}
Let $N$ candidates be sampled independently. Terminal-only search requires
$N_{\mathrm{term}} = \Omega\!\left(1 / \prod_{i=1}^m p_i\right)$
candidates to obtain constant success probability. By contrast, backward-guided bidirectional search requires only
$N_{\mathrm{bidir}} = O\!\left(p_{\min}^{-1}\log(m/\delta)\right)$,
where $p_{\min}=\min_i p_i$,
to collect evidence for all sub-goals with probability at least $1-\delta$. In the symmetric case $p_i=p$, the ratio is
$N_{\mathrm{term}} / N_{\mathrm{bidir}} = \Omega\!\left(p^{-(m-1)} / \log(m/\delta)\right)$,
which is exponential in the number of sub-goals $m$.
\end{theorem}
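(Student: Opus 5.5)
The proof has three parts: a lower bound for terminal-only search, an upper bound for backward-guided search, and the ratio of the two.

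\textbf{Lower bound for terminal-only search.} For a fresh candidate $n$, the assumption $V(x,n)=1 \Rightarrow C_i(n)=1$ for all $i$ gives
\[
\Pr[V(x,n)=1] \le \Pr\bigl[\textstyle\bigcap_i \{C_i(n)=1\}\bigr] = \prod_{i=1}^m p_i =: q,
\]
using independence across sub-goals. With $N$ independent candidates, a union bound gives
\[
\Pr[\exists j:\ V(x,n_j)=1] \le Nq .
\]
For this success probability to be at least a constant $c>0$, we need $N \ge c/q$, so $N_{\mathrm{term}}=\Omega(1/\prod_i p_i)$. The bound $1-(1-q)^N \le Nq$ would work equally well. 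No property of the search strategy is used beyond the fact that, without sub-goal signals, each candidate is an independent trial against the terminal verifier.

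\textbf{Upper bound for backward-guided search.} Here success means that every leaf sub-goal $g_i$ is witnessed by at least one sampled candidate, i.e.\ $C_i(n_j)=1$ for some $j$. This is exactly the evidence the backward scores $s(n)$ and pair scores $s(n_a,n_b)$ expose to the forward search. For a fixed $i$, candidates are independent, so
\[
\Pr[\text{no } n_j \text{ satisfies } g_i] = (1-p_i)^N \le e^{-p_i N} \le e^{-p_{\min} N}.
\]
A union bound over the $m$ sub-goals bounds the failure probability by $m e^{-p_{\min}N}$. This is at most $\delta$ once $N \ge p_{\min}^{-1}\log(m/\delta)$, which gives $N_{\mathrm{bidir}}=O(p_{\min}^{-1}\log(m/\delta))$. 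Independence across sub-goals is not even needed for this part; only independence across candidates is.

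\textbf{The ratio.} In the symmetric case $p_i=p$,
\[
\frac{N_{\mathrm{term}}}{N_{\mathrm{bidir}}} = \frac{\Omega(p^{-m})}{O\bigl(p^{-1}\log(m/\delta)\bigr)} = \Omega\!\left(\frac{p^{-(m-1)}}{\log(m/\delta)}\right).
\]
This is exponential in $m$ for fixed $p<1$.

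\textbf{Main obstacle.} The main difficulty is interpretive rather than technical. The theorem compares two different success criteria: finding a full solution versus collecting evidence for every sub-goal. The proof should state explicitly that the bidirectional bound concerns the latter, namely coverage of all leaf sub-goals. Turning that coverage into a terminal solution would need an extra assumption, for instance that recombining witnesses via combination or crossover yields a candidate satisfying all sub-goals, which the pair score is designed to favor. I would state this as a remark and not claim it as part of the bound.
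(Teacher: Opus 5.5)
Your proposal is correct and follows the paper's proof essentially step for step: the same bound $\Pr[V(x,n)=1]\le\prod_i p_i$ for terminal-only search, the same estimate $(1-p_i)^N\le e^{-p_iN}$ combined with a union bound over the $m$ sub-goals, and the same symmetric-case ratio. The differences are minor: you use $Nq$ in place of $1-(1-q)^N$, and you skip the paper's intermediate product formula $\prod_i(1-(1-p_i)^N)$, which is where cross-sub-goal independence would enter. Your closing remark, that coverage alone does not yield a terminal solution, makes explicit what the paper only asserts informally (``backward search identifies it and evolution can recombine'').
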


\section{Experiments}
We evaluate \ours\ on both post-training and inference across LLM and agent settings. For post-training, we consider Logical Reasoning (LLM) and Multi-Hop Reasoning (Agent). For inference, we consider three representative open problem solving benchmarks: Circle Packing (Square), Circle Packing (Rectangle), and the Heilbronn Convex problem. In each benchmark, we compare \ours\ against commonly used baselines and show that \ours\ consistently achieves better sampling.

\subsection{Bidirectional Evolutionary Search for Post-Training}

\subsubsection{Logical Reasoning}
For logical reasoning, we use the Knights-and-Knaves dataset~\citep{kk}. In this task, each puzzle involves a group of people who are either a knight or a knave, and the solver must deduce each one's identity.

\begin{wrapfigure}{r}{0.45\textwidth}
\vspace{-1em}
\centering
\includegraphics[width=0.43\textwidth]{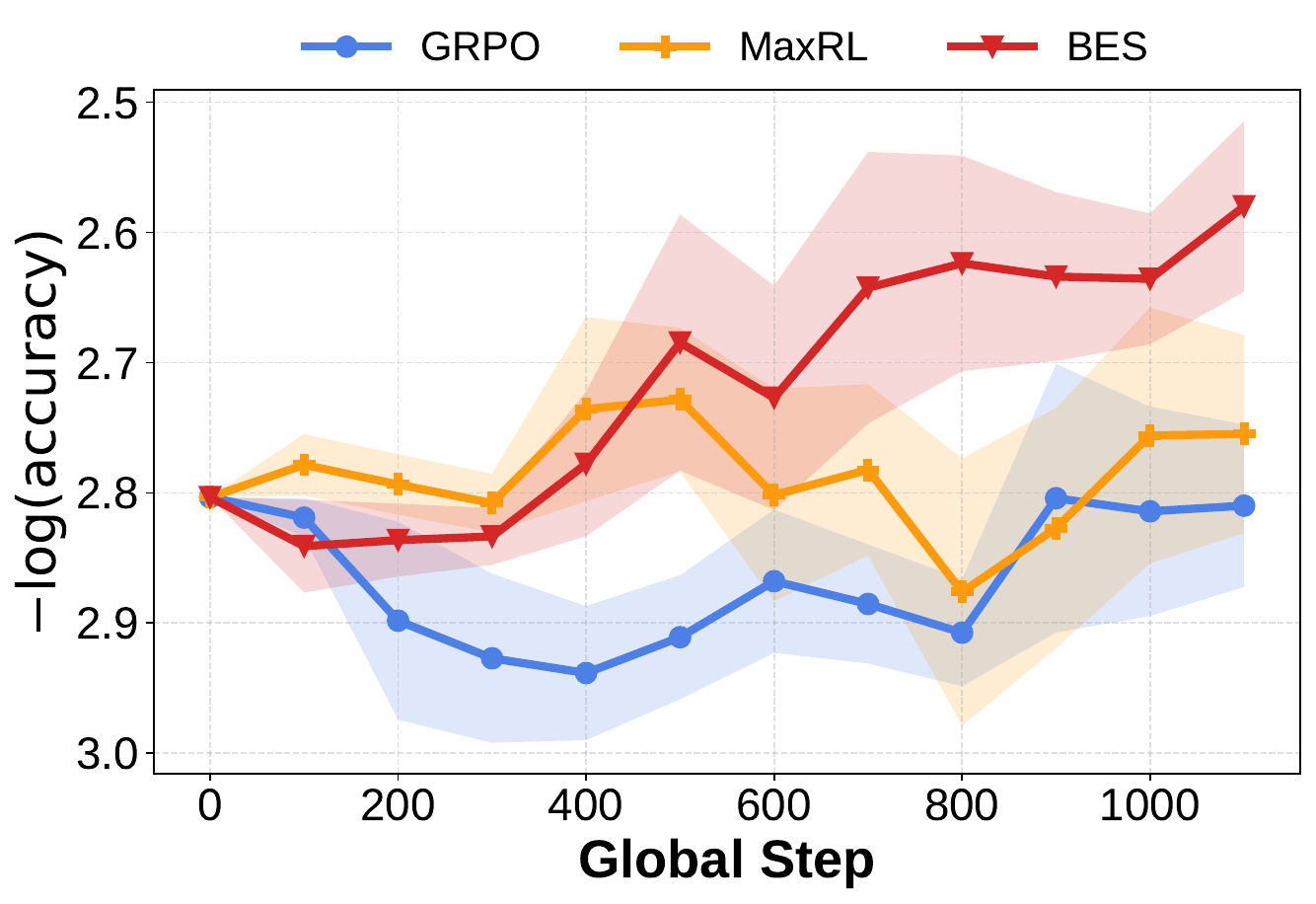}
\caption{EMA-smoothed validation accuracy on logical reasoning (Knights-and-Knaves).}
\label{fig:kk}
\vspace{-1em}
\end{wrapfigure}

\textbf{Baselines.} We select two post-training algorithms, GRPO~\citep{dsr1} and MaxRL~\citep{maxrl}, as baselines, as we apply \ours\ on top of MaxRL. We note that \ours\ is a sampling algorithm that is agnostic to the training method and can be applied on top of any post-training algorithm. 

\textbf{Training Setup.} We use Gemma-3-1B-it~\citep{gemma3} as the base model. 
To adapt the model to the Knights-and-Knaves benchmark, we first perform a cold start by fine-tuning on 1K SFT examples generated using the official data generation pipeline for 3 epochs, followed by 4 epochs of post-training on 5K problems, during which we compare the different methods on a 1.3K validation set. Detailed experimental configurations are provided in Appendix~\ref{app:logical_setup}. 

\textbf{Results.} As shown in Figure~\ref{fig:kk}, due to the difficulty of the training set, GRPO and MaxRL show little to no improvement during training, while \ours\ steadily improves on the validation set throughout training, demonstrating that \ours\ is more effective at discovering high-quality training samples.

\subsubsection{Multi-Hop Reasoning}

\begin{table*}[b]
\centering
\renewcommand{\arraystretch}{1.20}
\vspace{-1em}
\caption{Multi-hop reasoning post-training results on
MuSiQue with Llama-3.2-3B-Instruct and
Llama-3.1-8B-Instruct. We report accuracy, number of
valid searches, number of valid actions, and finish ratio. Higher is
better for all metrics. \textbf{Bold} denotes the best performing
method per backbone.}
\vspace{6pt}
\resizebox{\textwidth}{!}{%
\begin{tabular}{@{} l cccc @{}}
\toprule
\textbf{Method} 
& \textbf{Accuracy (\%, $\uparrow$)} 
& \textbf{\# Valid Search ($\uparrow$)} 
& \textbf{\# Valid Actions ($\uparrow$)} 
& \textbf{Finish Ratio ($\uparrow$)} \\
\midrule
\multicolumn{5}{@{}l}{\textit{Llama-3.2-3B-Instruct}} \\
\midrule[\lightrulewidth]
\rowcolor{cellbase}  Base model & 4.0 & -- & -- & -- \\
\rowcolor{cellbase}   + GRPO & 2.1 (-1.9) & 0.84 & 0.20 & 0.64 \\
\rowcolor{cellbase}  + Tree-GRPO & 3.9 (-0.1) & 1.50 & 2.14 & 0.64 \\
\rowcolor{tabband}   + \ours & \textbf{7.0 (+3.0)} & \textbf{2.31} & \textbf{3.29} & \textbf{0.97} \\
\midrule
\multicolumn{5}{@{}l}{\textit{Llama-3.1-8B-Instruct}} \\
\midrule[\lightrulewidth]
\rowcolor{cellbase}  Base model & 6.6 & -- & -- & -- \\
\rowcolor{cellbase}   + GRPO & 5.6 (-1.0) & 1.46 & 1.83 & 0.37 \\
\rowcolor{cellbase}  + Tree-GRPO & 7.4 (+0.8) & 0.65 & 1.36 & 0.71 \\
\rowcolor{tabband}   + \ours & \textbf{10.4 (+3.8)} & \textbf{2.11} & \textbf{3.05} & \textbf{0.94} \\
\bottomrule
\end{tabular}%
}
\label{tab:multihop}
\end{table*}

For multi-hop reasoning, we use the MuSiQue dataset~\citep{musique}. In this task, the agent must answer complex questions that require retrieving and integrating information across multiple documents, where no single document contains the complete answer. 

\textbf{Baselines.} We compare against GRPO~\citep{dsr1} and Tree-GRPO~\citep{treegrpo}, with the latter being the current state-of-the-art method for search agent post-training. We apply \ours\ on top of GRPO.

\textbf{Training Setup.} We adopt the training setup of Tree-GRPO~\citep{treegrpo}, using Llama-3.2-3B-Instruct and Llama-3.1-8B-Instruct~\citep{llama3} as base models. During the search process, the agent can perform multiple search actions before producing a final answer. Search results are provided by an offline Wikipedia server. We use the 3--4 hop solvable subset of the MuSiQue training set as our training data and train for 2 epochs, as additional epochs lead to training collapse. We evaluate on the full official MuSiQue validation set. Detailed configurations are provided in Appendix~\ref{app:qa_setup}.

\textbf{Results.} As shown in Table~\ref{tab:multihop}, GRPO consistently degrades from the base model on both scales, likely due to reward hacking where the model learns to skip search actions and guess directly, as reflected by its low number of valid searches. Tree-GRPO provides marginal improvement on the 8B model ($+0.8\%$) but fails to improve the 3B model. In contrast, \ours\ achieves substantial gains on both scales ($+3.0\%$ on 3B, $+3.8\%$ on 8B), outperforming all baselines by a wide margin. Beyond accuracy, \ours\ also produces agents with significantly more valid search actions and higher finish ratios, indicating that the trained agents learn to actively search rather than randomly guessing.

\subsection{Bidirectional Evolutionary Search for Inference}
At inference time, we evaluate the ability of \ours\ to solve open problems. Specifically, we consider three benchmarks: Circle Packing (Square), which seeks to pack $N$ circles into a unit square with maximum radius; Circle Packing (Rect), the analogous problem for a rectangular container; and Heilbronn (Convex), which seeks to place $N$ points in the unit square to maximize the minimum area of any convex polygon formed by subsets of the points.

\begin{table*}[h]
\centering
\vspace{-1em}
\renewcommand{\arraystretch}{1.20}
\caption{Open problem solving benchmarks with GPT-5 as the
backbone model. We report \textbf{Mean $\pm$ Std} and \textbf{Best}
objective values ($\uparrow$). \textbf{Bold} denotes the best
performing method.}
\vspace{6pt}
\resizebox{\textwidth}{!}{%
\begin{tabular}{@{} l cc cc cc @{}}
\toprule
\multirow{2}{*}{\textbf{Strategy}}
& \multicolumn{2}{c}{\textbf{Circle Packing (Square)}} 
& \multicolumn{2}{c}{\textbf{Circle Packing (Rect.)}} 
& \multicolumn{2}{c}{\textbf{Heilbronn (Convex)}} \\
\cmidrule(lr){2-3}
\cmidrule(lr){4-5}
\cmidrule(lr){6-7}
& Avg. & Best 
& Avg. & Best 
& Avg. & Best \\
\midrule
\multicolumn{7}{@{}l}{\textit{Human \& High-compute closed-source framework}} \\
\midrule[\lightrulewidth]
\rowcolor{cellbase} Human & -- & 2.634 & -- & 2.364 & -- & 0.0306 \\
\rowcolor{cellbase}  AlphaEvolve & -- & 2.635 & -- & 2.3658 & -- & 0.0309 \\
\midrule
\multicolumn{7}{@{}l}{\textit{Open-source frameworks}} \\
\midrule[\lightrulewidth]
\rowcolor{cellbase} OpenEvolve & 2.531 $\pm$ .018 & 2.541 & 2.267 $\pm$ .014 & 2.276 & 0.025 $\pm$ .005 & \textbf{0.027} \\
\rowcolor{cellbase}  GEPA & 2.613 $\pm$ .022 & 2.628 & 2.326 $\pm$ .023 & 2.354 & 0.025 $\pm$ .002 & \textbf{0.027} \\
\midrule
\multicolumn{7}{@{}l}{\textit{Base framework}} \\
\midrule[\lightrulewidth]
\rowcolor{cellbase} ShinkaEvolve & 2.464 $\pm$ .083 & 2.541 & 2.335 $\pm$ .026 & 2.358 & 0.023 $\pm$ .005 & 0.026 \\
\midrule
\multicolumn{7}{@{}l}{\textit{Ours}} \\
\midrule[\lightrulewidth]
\rowcolor{tabband}  \ours & \textbf{2.623} $\pm$ .014 & \textbf{2.632} & \textbf{2.349} $\pm$ .012 & \textbf{2.360} & \textbf{0.026} $\pm$ .001 & \textbf{0.027} \\
\bottomrule
\end{tabular}%
}
\label{tab:ops}
\end{table*}

\textbf{Baselines.} We compare against three open-source frameworks: OpenEvolve~\citep{openevolve}, GEPA~\citep{gepa}, and ShinkaEvolve~\citep{shinkaevolve}.  All baseline results are taken from SkyDiscover~\citep{skydiscover}, which uses the same backbone model, compute budget, and configuration as ours.  We apply \ours\ on top of ShinkaEvolve. For reference, we also include the results of human experts and AlphaEvolve~\citep{alphaevolve}, which is closed-source and uses significantly more compute than all other frameworks.

\textbf{Setup.} ShinkaEvolve maintains a population of candidate programs and iteratively proposes modifications. Since directly concatenating model outputs is not meaningful on this benchmark (where candidates are executable programs), we implement the evolution operators by prompting LLMs. We use GPT-5 as the backbone model. Detailed configurations are provided in Appendix~\ref{app:inference_setup}. 

\textbf{Results.} As shown in Table~\ref{tab:ops}, \ours\ outperforms open-source frameworks on every benchmark.  Notably, \ours\ also exhibits much lower variance across runs than all baselines, indicating the search is more stable and reliable. In Appendix~\ref{app:program}, we list a summary of the best programs discovered by \ours.

\subsection{Ablation Study}
\begin{wrapfigure}{r}{0.45\textwidth}
\vspace{-1em}
\centering
\includegraphics[width=0.43\textwidth]{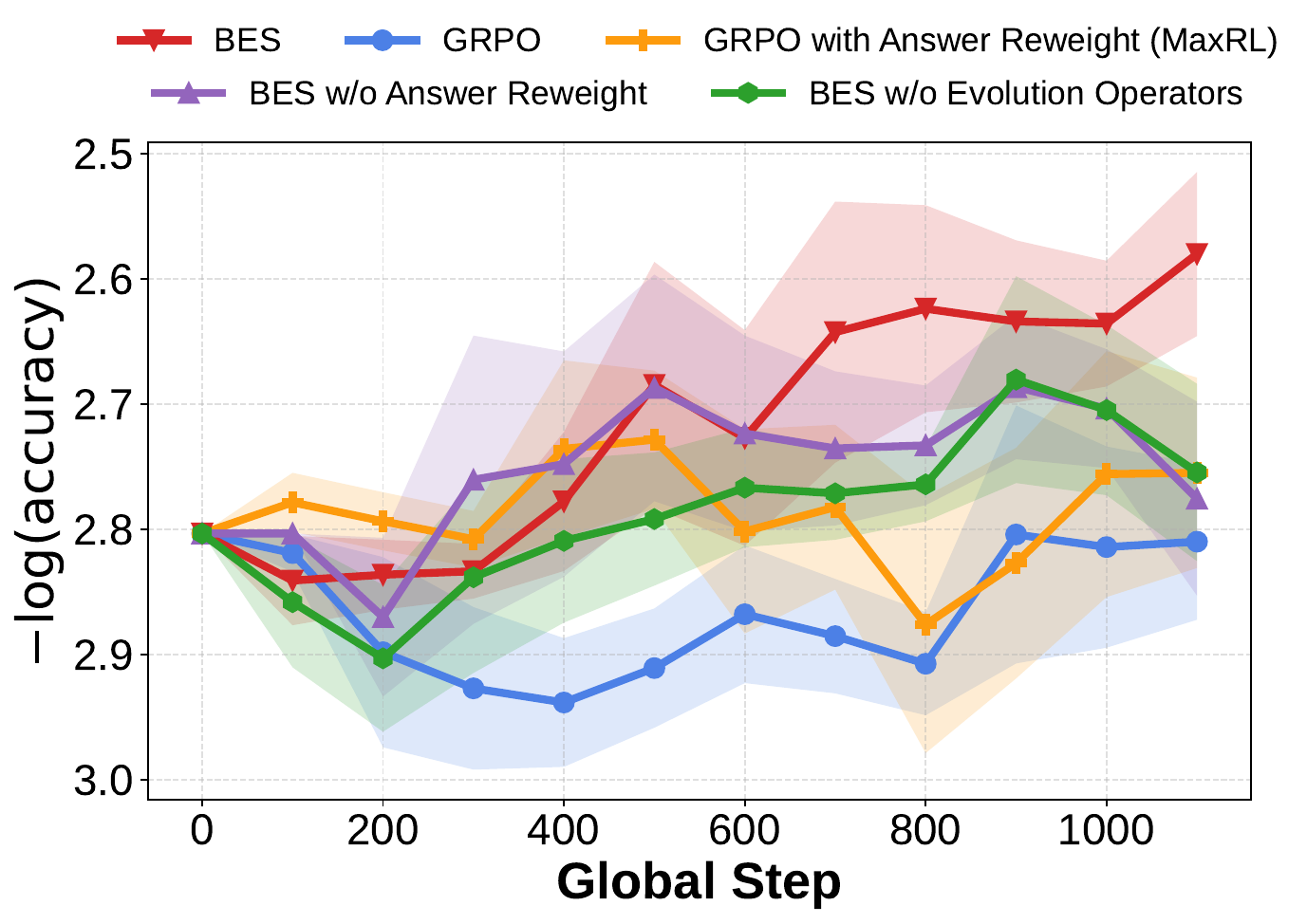}
\vspace{-1em}
\caption{Ablation study on logical reasoning.}
\label{fig:kk_abl}
\vspace{-1em}
\end{wrapfigure}

We conduct an ablation study on the Knights-and-Knaves benchmark. On this benchmark, \ours\ combines bidirectional evolutionary search for discovering high-quality samples with MaxRL's answer reweighting for training. We therefore consider two ablations: (1) removing answer reweighting; and (2) removing the evolution operators to verify that both bidirectional search and evolution operators are necessary. As shown in Figure~\ref{fig:kk_abl}, both ablations underperform the full \ours\ method, while still outperforming the GRPO and MaxRL baselines. This confirms the effectiveness of each component in our approach.

\subsection{Cost Analysis}

\textbf{Post-Training.} To evaluate the cost-effectiveness of \ours, we compare the per-step wall-clock time and final accuracy across GRPO, Tree-GRPO, and \ours\ when training Llama-3.2-3B-Instruct on the agentic search task. We report the median wall-clock time per step throughout the training process.

\begin{table}[h]
\centering
\renewcommand{\arraystretch}{1.20}
\caption{Cost analysis for multi-hop reasoning post-training on Llama-3.2-3B-Instruct.}
\begin{tabular}{lccc}
\toprule
\textbf{Method} & \textbf{Accuracy (\%)} & \textbf{\# Valid Search} & \textbf{Walltime (s)} \\
\midrule
\rowcolor{cellbase} GRPO & 2.1 (-1.9) & 0.84 & 64 \\
\rowcolor{cellbase}  Tree-GRPO & 3.9 (-0.1) & 1.50 & 240 \\
\rowcolor{tabband} \ours & \textbf{7.0 (+3.0)} & \textbf{2.31} & 309 \\
\bottomrule
\end{tabular}
\label{tab:cost_posttraining}
\end{table}

Notably, the low wall-clock time of GRPO is misleading: during training, GRPO quickly exhibits reward hacking, with the model learning to skip search actions and guess answers directly, as reflected by its low number of valid searches. In contrast, \ours\ incurs less than 30\% additional overhead compared to Tree-GRPO, while achieving significantly better performance across all metrics.

\textbf{Inference.} We compare the API cost of \ours\ against ShinkaEvolve across all three open problem solving benchmarks. As shown in Table~\ref{tab:cost_inference}, \ours\ achieves consistently higher average values while incurring modest additional API costs.

\vspace{-0.5em}
\begin{table}[h]
\centering
\renewcommand{\arraystretch}{1.15}
\caption{Cost analysis for open problem solving. We report average API cost per generation.}
\begin{tabular}{lccccccccc}
\toprule
& \multicolumn{3}{c}{\textbf{Circle Packing (Sq.)}} & \multicolumn{3}{c}{\textbf{Circle Packing (Rect.)}} & \multicolumn{3}{c}{\textbf{Heilbronn (Convex)}} \\
\cmidrule(lr){2-4} \cmidrule(lr){5-7} \cmidrule(lr){8-10}
\textbf{Method} & Avg. & Best & Cost & Avg. & Best & Cost & Avg. & Best & Cost \\
\midrule
\rowcolor{cellbase} ShinkaEvolve & 2.464 & 2.541 & \$13.0 & 2.335 & 2.358 & \$11.9 & 0.023 & 0.026 & \$11.5 \\
\rowcolor{tabband} \ours & \textbf{2.623} & \textbf{2.632} & \$18.6 & \textbf{2.349} & \textbf{2.360} & \$14.0 & \textbf{0.026} & \textbf{0.027} & \$13.7 \\
\bottomrule
\end{tabular}
\label{tab:cost_inference}
\end{table}
\vspace{-1em}

\section{Related Work}

\textbf{Self-Improvement for LLM and Agent.} A growing line of work studies how language models and agents can improve using their own outputs to self-evolve~\citep{selfimprovement, selftraining}. STaR~\citep{star} bootstraps reasoning by filtering correct rationales for fine-tuning, while Self-Refine~\citep{madaan2023self} revises outputs at inference time through self-generated feedback. Self-Rewarding Language Models~\citep{selfreward} use the model itself as a judge for preference optimization. In agentic settings, Reflexion~\citep{shinn2023reflexion} converts environmental feedback into verbal reflections, and Voyager~\citep{wang2023voyager} accumulates reusable skills through continual exploration. These methods typically refine individual trajectories or rely on the model to judge its own outputs. In contrast, \ours\ treats self-improvement as a structured search problem, systematically discovering high-quality solutions that facilitate model self-improvement.

\textbf{Search in LLM and Agent.} Recent work applies search to both training and inference of LLMs and agents~\citep{efficientevolution, funsearch}. On the training side, tree-structured exploration provides richer supervision and finer-grained credit assignment than standard rollout-based methods. Tree-GRPO~\citep{treegrpo} and TreeRL~\citep{treerl} integrate tree search directly into reinforcement learning, while ReST-MCTS*~\citep{restmcts}, MCTS-DPO~\citep{mctsdpo}, and rStar-Math~\citep{rstarmath} use search to bootstrap higher-quality training data for self-improvement. On the inference side, Tree of Thoughts~\citep{tot}, Graph of Thoughts~\citep{got}, and RAP~\citep{rap} established reasoning as an explicit search problem by expanding chain-of-thought into tree-structured exploration~\citep{abmcts}. More recently, evolutionary approaches such as AlphaEvolve~\citep{alphaevolve}, ShinkaEvolve~\citep{shinkaevolve}, and ThetaEvolve~\citep{thetaevolve} tackle hard and even open problems~\citep{languageevolutionary} by maintaining candidate populations with LLM-driven mutations and external evaluation. However, these methods predominantly rely on tree search and thus struggle to explore beyond the policy's own distribution.

\textbf{Classical Search Methods.} The recent surge of search-based methods in LLMs and agents draws on a long history of classical search. In symbolic planning and graph search, algorithms such as A*~\citep{astar} and bidirectional search introduced core principles including heuristic guidance, frontier expansion, and search-space reduction. Branch-and-bound methods~\citep{bbsurvey}, which prune subtrees whose bounds certify suboptimality, offer a natural analogy for modern methods that use verifiers to discard unpromising candidates early. Evolutionary methods, including genetic search~\citep{genesearch,geneprogram} and differential evolution~\citep{differentialevolution}, optimize by maintaining and iteratively refining candidate populations. 
\section{Conclusion}

In this paper, we present \ours, a bidirectional evolutionary search framework that addresses two fundamental limitations of existing search methods for LLMs and agents: sparse verification signals and confined candidate generation. \ours\ couples forward search, which evolves candidates through expansion and four evolution operators, with backward search, which recursively decomposes problems into verifiable sub-goals to provide dense intermediate feedback. We provide theoretical justification showing that candidates generated by expansion-only search are confined to a narrow entropy shell while evolution operators can escape it, and that backward sub-goal decomposition exponentially reduces the number of candidates needed to find a solution. Experiments on logical reasoning, multi-hop reasoning, and open problem solving demonstrate that \ours\ consistently improves over strong baselines in both post-training and inference settings. We hope that \ours\ provides useful insights for self-improving language models and agents.

\bibliography{ref}
\bibliographystyle{plainnat}

\newpage
\appendix

\etocdepthtag.toc{appendix}
\etocsettocstyle{\section*{Appendix - Table of Contents}}{}
\tableofcontents
\vspace{1em}

\clearpage
\section{Pseudo Code}
\label{app:algo}

We provide the full pseudocode for \ours\ below. Algorithm~\ref{alg:bes} describes the main loop, which alternates between forward search steps (Algorithm~\ref{alg:forward}) and backward scoring (Algorithm~\ref{alg:backward}). Every $K_{\mathrm{dec}}$ steps, the backward goal tree is refined by decomposing an unsolved leaf into finer sub-goals (Algorithm~\ref{alg:decompose}), after which all pool scores are recomputed under the updated tree.

\begin{algorithm}[h]
\caption{\ours: Bidirectional Evolutionary Search}
\label{alg:bes}
\begin{algorithmic}[1]
\Require problem $x$, policy $\pi_\theta$, verifier $V$, budget $B$, decomposition interval $K_{\mathrm{dec}}$
\State Initialize $\mathcal{G}$ with root goal $g_0$; prompt $\pi_\theta$ to decompose $g_0$ into initial sub-goals
\State $\mathcal{P} \leftarrow \{n_0\}$ with $n_0 = ()$ \Comment{Empty root node}
\For{$t = 0, 1, \ldots$ \textbf{while} $\mathrm{calls}(t) < B$}
    \State $n' \leftarrow \textsc{ForwardStep}(\mathcal{P}, \mathcal{C}_t, \mathcal{G}, \tau_t)$ \Comment{Algorithm~\ref{alg:forward}}
    \State $s(n') \leftarrow \textsc{BackwardScore}(n', g_0, \mathcal{G})$ \Comment{Algorithm~\ref{alg:backward}}
    \State $\mathcal{P} \leftarrow \mathcal{P} \cup \{n'\}$
    \If{$(t+1) \bmod K_{\mathrm{dec}} = 0$}
        \State $\mathcal{G} \leftarrow \textsc{BackwardDecompose}(x, \mathcal{G}, \mathcal{P})$ \Comment{Algorithm~\ref{alg:decompose}}
        \For{$n \in \mathcal{P}$} \Comment{Recompute all scores under refined tree}
            \State $s(n) \leftarrow \textsc{BackwardScore}(n, g_0, \mathcal{G})$
        \EndFor
    \EndIf
    \If{$n'$ is terminal $\wedge$ $V(x, n') = 1$}
        \State \Return $n'$
    \EndIf
\EndFor
\State \Return $\arg\max_{n \in \mathcal{P}:\, \mathrm{terminal}(n)} V(x, n)$
\end{algorithmic}
\end{algorithm}

Algorithm~\ref{alg:forward} details a single forward search step. An operator is sampled from the fixed distribution over the five operators. Single-parent operators (expansion and deletion) select a parent via Boltzmann selection based on backward scores; two-parent operators (combination, translocation, crossover) select a pair that maximizes joint sub-goal coverage.

\begin{algorithm}[h]
\caption{\textsc{ForwardStep}: Forward Search Step}
\label{alg:forward}
\begin{algorithmic}[1]
\Require pool $\mathcal{P}$, eligible set $\mathcal{C}_t$, goal tree $\mathcal{G}$, temperature $\tau_t$
\State Sample operator $o \in \{\textit{expand}, \textit{combine}, \textit{delete}, \textit{translocate}, \textit{crossover}\}$ with fixed probabilities
\If{$o \in \{\textit{expand}, \textit{delete}\}$} \Comment{Single-parent operators}
    \State Sample parent $n$ from $\mathcal{C}_t$ via Boltzmann selection (Eq.~\ref{eq:boltz})
    \If{$o = \textit{expand}$}
        \State Sample $K \sim \mathrm{Uniform}\{1, \ldots, K_{\max}\}$
        \State $n' \leftarrow (y_1, \ldots, y_t, y_{t+1}, \ldots, y_{t+K})$ where $y_{t+k} \sim \pi_\theta(\cdot \mid x \oplus y_{1:t+k-1})$
    \Else
        \State Sample $\ell \sim \mathrm{Uniform}\{2, \ldots, t-1\}$; \, $n' \leftarrow (y_1, \ldots, y_{\ell-1}, y_{\ell+1}, \ldots, y_t)$
    \EndIf
\Else \Comment{Two-parent operators}
    \State Sample pair $(n_a, n_b)$ from $\mathcal{C}_t$ via pair Boltzmann selection (Eq.~\ref{eq:bi_boltz})
    \State Compute shared prefix length $s$ and suffixes $\sigma_a, \sigma_b$
    \If{$o = \textit{combine}$}
        \State $n' \leftarrow y_{a,1:s} \oplus \sigma_a \oplus \sigma_b$
    \ElsIf{$o = \textit{translocate}$}
        \State Sample $r, q$; \, $n' \leftarrow y_{a,1:s} \oplus \sigma_a[1:r\!-\!1] \oplus (\sigma_b)_q \oplus \sigma_a[r\!+\!1:m_a]$
    \ElsIf{$o = \textit{crossover}$}
        \State Sample $i, j$; \, $n' \leftarrow y_{a,1:s} \oplus \sigma_a[1:i] \oplus \sigma_b[j\!+\!1:m_b]$
    \EndIf
\EndIf
\State \Return $n'$
\end{algorithmic}
\end{algorithm}

Algorithm~\ref{alg:backward} computes the backward score for a candidate node by recursively traversing the sub-goal tree. If a sub-goal is fully satisfied, the score is 1; otherwise, the score blends the local verifier output with the average score over its children.

\begin{algorithm}[h]
\caption{\textsc{BackwardScore}: Backward Search Scoring}
\label{alg:backward}
\begin{algorithmic}[1]
\Require candidate node $n$, sub-goal $g$, goal tree $\mathcal{G}$
\If{$V_g(x, n) = 1$}
    \State \Return $1$
\ElsIf{$\mathrm{ch}(g) \neq \emptyset$}
    \State \Return $\alpha \cdot V_g(x, n) + (1 - \alpha) \cdot \frac{1}{|\mathrm{ch}(g)|} \sum_{g' \in \mathrm{ch}(g)} \textsc{BackwardScore}(n, g', \mathcal{G})$
\Else
    \State \Return $V_g(x, n)$
\EndIf
\end{algorithmic}
\end{algorithm}

Algorithm~\ref{alg:decompose} refines the goal tree by selecting a random unsolved leaf sub-goal and prompting the policy to decompose it into finer children, each equipped with its own local verifier.

\begin{algorithm}[h]
\caption{\textsc{BackwardDecompose}: Goal Tree Decomposition}
\label{alg:decompose}
\begin{algorithmic}[1]
\Require problem $x$, current goal tree $\mathcal{G}$, pool $\mathcal{P}$
\State $\mathcal{L} \leftarrow \{g \in \mathcal{G} : \mathrm{ch}(g) = \emptyset \wedge \max_{n \in \mathcal{P}} V_g(x, n) < 1\}$ \Comment{Unsolved leaves}
\State Sample $g^*$ uniformly from $\mathcal{L}$
\State Prompt $\pi_\theta$ to decompose $g^*$ into children $\{g'_1, \ldots, g'_c\}$ with local verifiers $\{V_{g'_1}, \ldots, V_{g'_c}\}$
\State $\mathrm{ch}(g^*) \leftarrow \{g'_1, \ldots, g'_c\}$; \, add to $\mathcal{G}$
\State \Return $\mathcal{G}$
\end{algorithmic}
\end{algorithm}

\section{Formal Definitions of Evolution Operators}
\label{app:operators}

For two parents $n_a, n_b$, let $s=\max\{\ell:y_{a,1:\ell}=y_{b,1:\ell}\}$ denote their common-prefix length, and let $\sigma_a=(y_{a,s+1},\dots,y_{a,t_a})$ and $\sigma_b=(y_{b,s+1},\dots,y_{b,t_b})$ be their respective suffixes.

\textit{(i) Combination.} $n' = y_{a,1:s}\,\oplus\,\sigma_a\,\oplus\,\sigma_b$.

\textit{(ii) Deletion.} $n' = (y_{1},\dots,y_{\ell-1},\,y_{\ell+1},\dots,y_{t})$, where $\ell\sim\mathrm{Uniform}\{2,\dots,t-1\}$.

\textit{(iii) Translocation.} $n' = y_{a,1:s}\,\oplus\,\sigma_a[1:r-1]\,\oplus\,(\sigma_b)_q\,\oplus\,\sigma_a[r+1:m_a]$, where $r\sim\mathrm{Uniform}\{1,\dots,m_a\}$ and $q\sim\mathrm{Uniform}\{1,\dots,m_b\}$.

\textit{(iv) Crossover.} $n' = y_{a,1:s}\,\oplus\,\sigma_a[1:i]\,\oplus\,\sigma_b[j+1:m_b]$, where $i\in\{0,\dots,m_a\}$ and $j\in\{0,\dots,m_b-1\}$ are sampled uniformly.

\section{Theoretical Motivations}
\label{app:theory}

\subsection{Theoretical Motivation for Evolution Operators}
\label{app:theory-mutation}

In this section, we give a theoretical justification for the evolution operators. Expansion-only search builds candidates one lineage at a time. Evolution, by contrast, recombines blocks from different trajectories, so the number of reachable candidates grows as a Cartesian product of the per-block libraries.

Fix a task $x$ and a step horizon $T$. A trajectory is $Y=(y_1,\ldots,y_T)$, where each $y_t$ is a step:
\begin{equation}
  P(Y) := \pi_\theta(Y\mid x)
  =\prod_{t=1}^{T}P(y_t\mid y_{<t}).
\end{equation}

To quantify the uncertainty at each step, we introduce the \emph{surprisal} (pointwise information content) of the $t$-th step and its expected value, the \emph{conditional entropy}:
\begin{equation}
  Z_t := -\log P(Y_t \mid Y_{<t}),
  \qquad
  h_t(y_{<t}) := H_P(Y_t \mid Y_{<t} = y_{<t}).
\end{equation}
Intuitively, $Z_t$ measures how surprising the generated step is, while $h_t$ measures how uncertain the policy is about the next step on average, given the history so far.

Summing the per-step surprisals yields the total information content of the trajectory:
\begin{equation}
  S_T := -\log P(Y) = \sum_{t=1}^{T} Z_t.
\end{equation}
Taking its expectation recovers the trajectory-level entropy, which decomposes by the chain rule as
\begin{equation}
  H_T := H_P(Y) = \mathbb{E}_P[S_T]
  = \sum_{t=1}^{T} \mathbb{E}_P\bigl[h_t(Y_{<t})\bigr].
\end{equation}

Finally, let $\mathcal{F}_t = \sigma(Y_1,\dots,Y_t)$ denote the natural filtration, i.e.\ the information available after observing the first $t$ steps. This filtration will serve as the basis for the martingale analysis that follows.





We assume that evolution operators and verification are computationally cheap relative to policy calls. This is typical in practice: evolution operators act by directly editing step sequences (Section~\ref{sec:forward}), and verification is fast for most tasks of interest, e.g.\ test-case execution for coding. The computational bottleneck is therefore mainly the number of policy calls. Other assumptions have been stated in Section~\ref{sec:theory-mutation}.

\subsubsection{Discussion of Assumptions}
\label{app:assumptions}

To begin with, we first briefly discuss why Assumptions~\ref{assump:bounded}--\ref{assump:tc} are mild and naturally satisfied in practice.

\paragraph{Assumption~\ref{assump:bounded} (bounded per-step surprise).} This requires that no single step carries unbounded information. It holds for any policy with a finite action space, which includes all LLMs with a fixed vocabulary and all agents with a discrete action set. For an LLM with vocabulary size $|\mathcal{V}|$, we have $L = \log |\mathcal{V}|$.

\paragraph{Assumption~\ref{assump:decay} (decaying step dependence).} This requires that changing a single step has diminishing influence on the conditional entropy of distant future steps. In practice, this is satisfied when the policy's effective context dependence decays with distance, a property that holds for finite-context models and, empirically, for transformer-based LLMs whose attention weights concentrate on recent tokens for most layers. For a policy with finite memory (e.g., a Markov chain of order $d$), $\beta_\ell = 0$ for all $\ell > d$, trivially satisfying the assumption.

\paragraph{Assumption~\ref{assump:tc} (linear block total correlation).} By the chain rule, the total correlation decomposes as $\mathrm{TC}_P = \sum_{j=1}^k I_P(U_j; U_{<j})$, where $I_P(U_j; U_{<j})$ is the mutual information between block $j$ and all preceding blocks. This term is strictly positive whenever block $j$ depends on its context, which is the generic case for coherent sequential reasoning: the content of later reasoning steps is informed by earlier ones. As long as each block's dependence on the past contributes at least a constant amount of mutual information, the total correlation grows linearly in the number of blocks, and hence linearly in $T$ when blocks have bounded size. Empirically, this is a weak requirement: even simple autoregressive models on natural language exhibit strong inter-block dependence, and reasoning traces exhibit even stronger dependence since later steps logically build on earlier conclusions.

\subsubsection{Shell Confinement of Expansion}

The deviation $S_T - H_T$ between a trajectory's actual information content and the expected entropy governs how tightly ordinary rollouts cluster around the typical set. We aim to show that this deviation is small with high probability. To this end, we decompose it into two terms and bound each separately.

\begin{lemma}[Martingale decomposition]
\label{lem:decomp}
Define
\begin{equation}
  D_t:=Z_t-h_t(Y_{<t}),
  \qquad
  M_T:=\sum_{t=1}^{T}D_t,
  \qquad
  R_T:=\sum_{t=1}^{T}\big(h_t(Y_{<t})-\mathbb{E}h_t(Y_{<t})\big).
\end{equation}
Then $(D_t,\mathcal{F}_t)$ is a martingale difference sequence, $|D_t|\le L$, and
\begin{equation}
  S_T-H_T=M_T+R_T.
  \label{eq:martingale-decomp}
\end{equation}
Here $M_T$ captures the per-step noise, while $R_T$ captures how the realized trajectory shifts the conditional entropy away from its unconditional mean.
\end{lemma}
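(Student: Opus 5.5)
The lemma has three parts, which I will prove in order: the identity \eqref{eq:martingale-decomp}, the martingale-difference property, and the bound $|D_t|\le L$. The identity is purely algebraic. Since $S_T=\sum_{t=1}^T Z_t$, adding and subtracting $h_t(Y_{<t})$ in each summand gives
\[
S_T=\sum_{t=1}^T \bigl(Z_t-h_t(Y_{<t})\bigr)+\sum_{t=1}^T h_t(Y_{<t}).
\]
The chain-rule expression $H_T=\sum_{t=1}^T\mathbb{E}h_t(Y_{<t})$ stated just before the lemma then lets me subtract $H_T$ from the second sum term by term. This yields exactly $M_T+R_T$.

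For the martingale-difference property, adaptedness is immediate. $Z_t=-\log P(Y_t\mid Y_{<t})$ and $h_t(Y_{<t})$ are both measurable functions of $(Y_1,\dots,Y_t)$, hence $\mathcal{F}_t$-measurable. Integrability follows from the boundedness established in the next step. For the conditional mean, I condition on $\mathcal{F}_{t-1}$, i.e.\ on $Y_{<t}=y_{<t}$. Then
\[
\mathbb{E}[Z_t\mid Y_{<t}=y_{<t}]=-\sum_v P(v\mid y_{<t})\log P(v\mid y_{<t})=h_t(y_{<t}),
\]
which is just the definition of conditional entropy. Hence $\mathbb{E}[D_t\mid\mathcal{F}_{t-1}]=0$ almost surely.

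For the bound, I use Assumption~\ref{assump:bounded}. Almost surely $Y_t$ is a positive-probability step after a reachable prefix, so $0\le Z_t\le L$. Since $h_t(y_{<t})$ is a conditional average of such surprisals over positive-probability steps, it also satisfies $0\le h_t\le L$. The difference of two numbers in $[0,L]$ lies in $[-L,L]$, so $|D_t|\le L$.

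The only point needing care is this last step: the surprisal bound is applied only on positive-probability steps, where it holds almost surely. This matters for $h_t$ as well, because zero-probability steps contribute nothing to the entropy sum. Everything else is definitional.
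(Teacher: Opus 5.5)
Your proposal is correct and follows the paper's proof essentially step for step: both identify $\mathbb{E}[Z_t\mid\mathcal{F}_{t-1}]=h_t(Y_{<t})$ to get the martingale-difference property, use Assumption~\ref{assump:bounded} to place both $Z_t$ and its conditional mean $h_t(Y_{<t})$ in $[0,L]$, and obtain the identity by adding and subtracting $\sum_t h_t(Y_{<t})$ and using the chain rule $H_T=\sum_t\mathbb{E}h_t(Y_{<t})$. Your extra remarks on adaptedness, integrability, and applying the surprisal bound only to positive-probability steps are harmless refinements that the paper leaves implicit.
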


\begin{proof}
By definition,
$\mathbb{E}[Z_t\mid \mathcal{F}_{t-1}] = H_P(Y_t\mid Y_{<t}) = h_t(Y_{<t})$,
so $\mathbb{E}[D_t\mid\mathcal{F}_{t-1}]=0$. Assumption~\ref{assump:bounded} gives $0\le Z_t\le L$. Since $h_t(Y_{<t})$ is the conditional expectation of $Z_t$, we also have $0\le h_t(Y_{<t})\le L$, hence $|D_t|\le L$. Finally, $S_T=\sum_t D_t + \sum_t h_t(Y_{<t})$ and $H_T=\sum_t\mathbb{E}h_t(Y_{<t})$, which proves Eq.~\eqref{eq:martingale-decomp}.
\end{proof}

\begin{lemma}[Concentration of $R_T$]
\label{lem:rt}
Under Assumption~\ref{assump:decay}, for every $u>0$,
\begin{equation}
  \Pr(|R_T|\ge u)
  \le 2\exp\!\left(-\frac{u^2}{2TC_\beta^2}\right).
  \label{eq:rt-bound}
\end{equation}
\end{lemma}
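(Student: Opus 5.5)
The plan is to write $R_T$ as a sum of Doob martingale differences and apply Azuma--Hoeffding. Set $F:=\sum_{t=1}^T h_t(Y_{<t})$, so that $R_T=F-\mathbb{E}F$. Define
\[
  G_s := \mathbb{E}[F\mid\mathcal{F}_s], \qquad s=0,1,\dots,T,
\]
so that $G_0=\mathbb{E}F$ and $G_T=F$. (Since $h_t$ only depends on $Y_{<t}$, we already have $G_{T-1}=F$.) Writing $\Delta_s:=G_s-G_{s-1}$, the sequence $(\Delta_s,\mathcal{F}_s)$ is a martingale difference sequence and $R_T=\sum_{s=1}^T\Delta_s$.

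The key step is to bound each $\Delta_s$ using Assumption~\ref{assump:decay}. Condition on $Y_{<s}=y_{<s}$. Then $G_s$ is the function
\[
  v\mapsto \sum_t \mathbb{E}[h_t(Y_{<t})\mid Y_{<s}=y_{<s},Y_s=v],
\]
and $G_{s-1}$ is the average of this function over $v\sim P(\cdot\mid y_{<s})$. Split the sum over $t$ into two ranges:
\begin{itemize}
  \item For $t\le s$, the term $h_t(Y_{<t})$ is $\mathcal{F}_{s-1}$-measurable, so it is the same in $G_s$ and $G_{s-1}$ and cancels.
  \item For $t>s$, Assumption~\ref{assump:decay} says the conditional expectation varies by at most $\beta_{t-s}$ as $v$ ranges over steps.
\end{itemize}
Hence, conditional on $\mathcal{F}_{s-1}$, $G_s$ lies in an interval of length at most $\sum_{t>s}\beta_{t-s}\le C_\beta$, and $G_{s-1}$ is an average of values in that same interval. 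It follows that $|\Delta_s|\le C_\beta$. More precisely, $\Delta_s$ lies in an $\mathcal{F}_{s-1}$-measurable interval of width $C_\beta$.

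Applying Azuma--Hoeffding with increments bounded by $c_s=C_\beta$ gives
\[
  \Pr(|R_T|\ge u)\le 2\exp\!\Bigl(-\frac{u^2}{2\sum_s c_s^2}\Bigr)=2\exp\!\Bigl(-\frac{u^2}{2TC_\beta^2}\Bigr),
\]
which is Eq.~\eqref{eq:rt-bound}. The interval-width refinement would give the sharper Hoeffding constant $2u^2/(TC_\beta^2)$, but the weaker form stated is enough.

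The main obstacle is the bounded-differences step. One has to check that Assumption~\ref{assump:decay} really controls the full conditional expectation of the future sum of $h_t$, and not just a single term. This is handled by linearity of conditional expectation together with summability of $(\beta_\ell)$. One also has to handle conditioning only on reachable prefixes, i.e.\ those of positive probability, which is where the assumption is stated. No other earlier result is needed beyond the martingale framework of Lemma~\ref{lem:decomp}.
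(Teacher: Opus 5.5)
Your proposal is correct and follows the paper's proof step for step. Both use the same Doob martingale $\mathbb{E}[\sum_t h_t(Y_{<t})\mid\mathcal{F}_s]$, the same increment bound $|\Delta_s|\le\sum_{t>s}\beta_{t-s}\le C_\beta$ from Assumption~\ref{assump:decay}, and the same application of Azuma--Hoeffding. Your averaging argument, that $G_{s-1}$ is a $P(\cdot\mid y_{<s})$-average of the values of $G_s$ over $v$, makes explicit a step the paper only asserts, and your remark about the sharper Hoeffding constant is also correct.
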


\begin{proof}
Let $G(Y)=\sum_{t=1}^{T}h_t(Y_{<t})$, so $R_T=G(Y)-\mathbb{E}G(Y)$. We construct a Doob martingale by progressively revealing the trajectory one step at a time. Define
\begin{equation}
  A_s=\mathbb{E}[G(Y)\mid Y_1,\dots,Y_s]
  = \sum_{t=1}^{s} h_t(Y_{<t}) + \sum_{t=s+1}^{T} \mathbb{E}[h_t(Y_{<t})\mid Y_1,\dots,Y_s],
\end{equation}
where the first sum is already determined by the observed steps, and the second sum averages over the unobserved future. Note that $A_0 = \mathbb{E}[G(Y)]$ and $A_T = G(Y)$, so $R_T = A_T - A_0 = \sum_{s=1}^{T}\delta_s$ where $\delta_s = A_s - A_{s-1}$.

Expanding the increment gives
\begin{equation}
  \delta_s = \sum_{t=s+1}^{T}\Big(\mathbb{E}[h_t(Y_{<t})\mid Y_1,\dots,Y_s] - \mathbb{E}[h_t(Y_{<t})\mid Y_1,\dots,Y_{s-1}]\Big).
\end{equation}
Each term measures how much the prediction of $h_t$ changes upon revealing $Y_s$. By Assumption~\ref{assump:decay}, this change is bounded by $\beta_{t-s}$ for each $t>s$. Therefore
\begin{equation}
  |\delta_s|\le \sum_{t=s+1}^{T}\beta_{t-s}\le \sum_{\ell=1}^{\infty}\beta_\ell = C_\beta.
\end{equation}
Since $(A_s)$ is a martingale, $\mathbb{E}[\delta_s\mid Y_1,\dots,Y_{s-1}]=0$. Applying the Azuma-Hoeffding inequality to $R_T=\sum_{s=1}^{T}\delta_s$ with bounded increments $|\delta_s|\le C_\beta$ gives Eq.~\eqref{eq:rt-bound}.
\end{proof}

\begin{lemma}[Concentration of $M_T$]
\label{lem:mt}
The first term $M_T$ is itself a martingale with bounded increments $|D_t| \le L$. Let
\begin{equation}
  V_T:=\sum_{t=1}^{T}\mathbb{E}[D_t^2\mid\mathcal{F}_{t-1}].
\end{equation}
If $V_T\le v_T$ almost surely for a deterministic $v_T$, then for every $u>0$,
\begin{equation}
  \Pr(|M_T|\ge u)
  \le 2\exp\!\left(-\frac{u^2}{2(v_T+Lu/3)}\right).
  \label{eq:mt-freedman}
\end{equation}
\end{lemma}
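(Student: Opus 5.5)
This is a direct application of Freedman's inequality for martingales with bounded increments. The plan has three steps.

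First, I would record the martingale structure, which comes from Lemma~\ref{lem:decomp}. There $(D_t,\mathcal{F}_t)$ is shown to be a martingale difference sequence with $|D_t|\le L$. Hence $M_s=\sum_{t\le s}D_t$ is an $(\mathcal{F}_s)$-martingale started at $M_0=0$, with increments bounded by $L$. Its predictable quadratic variation is $\langle M\rangle_T=\sum_t\mathbb{E}[D_t^2\mid\mathcal{F}_{t-1}]=V_T$. Because $D_t$ is centered given $\mathcal{F}_{t-1}$, each summand equals $\mathrm{Var}(Z_t\mid\mathcal{F}_{t-1})$, which is at most $L^2$. So $V_T\le TL^2$ always, and a deterministic bound $v_T$ always exists.

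Second, I would prove the upper tail $\Pr(M_T\ge u)$ by the standard exponential-supermartingale argument. For $\lambda\in(0,3/L)$, use $e^x\le 1+x+\frac{x^2}{2(1-|x|/3)}$, valid for $|x|<3$. Combined with $|\lambda D_t|\le \lambda L$ and $\mathbb{E}[D_t\mid\mathcal{F}_{t-1}]=0$, this gives
\[
\mathbb{E}\bigl[e^{\lambda D_t}\mid\mathcal{F}_{t-1}\bigr]\le \exp\!\Bigl(\tfrac{\lambda^2}{2(1-\lambda L/3)}\,\mathbb{E}[D_t^2\mid\mathcal{F}_{t-1}]\Bigr).
\]
Set $g(\lambda)=\frac{\lambda^2}{2(1-\lambda L/3)}$. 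Then $W_s=\exp\bigl(\lambda M_s-g(\lambda)\sum_{t\le s}\mathbb{E}[D_t^2\mid\mathcal{F}_{t-1}]\bigr)$ is a nonnegative supermartingale with $\mathbb{E}W_T\le1$. On the event $\{M_T\ge u\}$, the hypothesis $V_T\le v_T$ a.s.\ gives $W_T\ge e^{\lambda u-g(\lambda)v_T}$. Markov's inequality then yields $\Pr(M_T\ge u)\le \exp(-\lambda u+g(\lambda)v_T)$.

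Third, I would optimize over $\lambda$ and symmetrize. The key choice is $\lambda=u/(v_T+Lu/3)$, which lies in $(0,3/L)$. Substituting it gives exactly $-\frac{u^2}{2(v_T+Lu/3)}$ in the exponent, and this is the only computational step that needs care. For the lower tail, apply the same argument to $-M_T$, whose increments $-D_t$ satisfy the same bound and have the same conditional variances. A union bound over the two tails produces the factor $2$ in Eq.~\eqref{eq:mt-freedman}.

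The main obstacle is only bookkeeping: checking the scalar inequality for $e^x$ (one can compare Taylor coefficients, since $k!\ge 2\cdot3^{k-2}$) and verifying the choice of $\lambda$. Alternatively, I could simply cite Freedman's inequality (Freedman, 1975) once the martingale structure and the bounds $|D_t|\le L$ and $V_T\le v_T$ are in place.
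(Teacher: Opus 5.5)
Your proposal is correct and follows the paper's proof, which simply invokes Freedman's inequality for $M_T$ and $-M_T$ and notes the fallback proxy $v_T = TL^2$; you go further and derive the bound via the exponential supermartingale, and the algebra with $\lambda = u/(v_T+Lu/3)$ checks out. The only nit is the degenerate case $v_T=0$, where your $\lambda$ equals $3/L$ and leaves the open interval, but there $D_t=0$ a.s., so $M_T=0$ and the bound holds trivially.
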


\begin{proof}
This is Freedman's inequality for martingales with increments bounded by $L$, applied to $M_T$ and $-M_T$. If no sharper variance proxy is available, the worst-case bound $v_T=TL^2$ is valid.
\end{proof}

Combining the two concentration results yields a finite-sample analogue of the asymptotic equipartition property (AEP): with high probability, the information content of a trajectory is close to the expected entropy.

\begin{theorem}[Finite-sample AEP for policy rollouts]
\label{thm:aep}
Under Assumptions~\ref{assump:bounded}--\ref{assump:decay} and the variance condition in Lemma~\ref{lem:mt},
\begin{equation}
  \Pr(|S_T-H_T|\ge u)
  \le
  2\exp\!\left(-\frac{u^2}{8(v_T+Lu/6)}\right)
  +2\exp\!\left(-\frac{u^2}{8TC_\beta^2}\right).
  \label{eq:aep}
\end{equation}
In particular, if $v_T=O(T)$, then for every fixed $\epsilon>0$,
\begin{equation}
  \Pr\!\left(\left|\frac{S_T}{T}-\frac{H_T}{T}\right|\ge\epsilon\right)
  \le \exp(-\Omega(T)).
\end{equation}
\end{theorem}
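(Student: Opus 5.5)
The plan is to combine Lemmas~\ref{lem:decomp}, \ref{lem:rt}, and \ref{lem:mt} through a union bound that splits the deviation budget $u$ evenly between the two terms. By Lemma~\ref{lem:decomp}, $S_T-H_T=M_T+R_T$. If $|S_T-H_T|\ge u$, then at least one of $|M_T|\ge u/2$ or $|R_T|\ge u/2$ must hold. Hence
\begin{equation*}
  \Pr(|S_T-H_T|\ge u)\le \Pr(|M_T|\ge u/2)+\Pr(|R_T|\ge u/2).
\end{equation*}

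Next, I would evaluate each term with the corresponding lemma at $u/2$. Lemma~\ref{lem:rt} gives
\[
\Pr(|R_T|\ge u/2)\le 2\exp\!\left(-\frac{(u/2)^2}{2TC_\beta^2}\right)=2\exp\!\left(-\frac{u^2}{8TC_\beta^2}\right),
\]
which is exactly the second term of Eq.~\eqref{eq:aep}. Freedman's bound in Lemma~\ref{lem:mt}, applied at $u/2$ under the almost-sure variance condition $V_T\le v_T$, gives
\[
\Pr(|M_T|\ge u/2)\le 2\exp\!\left(-\frac{u^2/4}{2(v_T+Lu/6)}\right)=2\exp\!\left(-\frac{u^2}{8(v_T+Lu/6)}\right),
\]
which matches the first term. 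Adding the two bounds yields Eq.~\eqref{eq:aep}.

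For the asymptotic statement, set $u=\epsilon T$ and assume $v_T\le cT$; the worst case $v_T=TL^2$ always qualifies. The first exponent becomes
\[
\frac{\epsilon^2T^2}{8(cT+L\epsilon T/6)}=\frac{\epsilon^2}{8(c+L\epsilon/6)}\,T,
\]
which is linear in $T$ with a constant depending only on $\epsilon$, $c$, and $L$. The second exponent becomes $\epsilon^2T/(8C_\beta^2)$, also linear in $T$. If $C_\beta=0$, then $R_T$ is almost surely zero and that term vanishes. The sum of two terms of the form $2e^{-cT}$ is $\exp(-\Omega(T))$, absorbing the factor $4$ for large $T$. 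Finally, dividing the event $|S_T-H_T|\ge\epsilon T$ by $T$ gives the normalized form.

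I expect no real obstacle here. The only care needed is bookkeeping the constants correctly when substituting $u/2$ into Freedman's bound, so that the $Lu/3$ term becomes $Lu/6$.
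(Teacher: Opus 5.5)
Your proposal is correct and follows the paper's proof: you decompose $S_T-H_T=M_T+R_T$ via Lemma~\ref{lem:decomp}, union-bound at threshold $u/2$, apply Lemmas~\ref{lem:mt} and~\ref{lem:rt}, and then take $u=\epsilon T$ with $v_T=O(T)$. Your constant bookkeeping (the $Lu/3\to Lu/6$ substitution) is right, and your remarks on the case $C_\beta=0$ and on absorbing the factor $4$ are accurate refinements of the same argument.
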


\begin{proof}
By Lemma~\ref{lem:decomp}, $S_T-H_T=M_T+R_T$. A union bound gives
$\Pr(|S_T-H_T|\ge u) \le \Pr(|M_T|\ge u/2)+\Pr(|R_T|\ge u/2)$.
Applying Lemma~\ref{lem:mt} and Lemma~\ref{lem:rt} with threshold $u/2$ proves Eq.~\eqref{eq:aep}. Taking $u=\epsilon T$ and $v_T=O(T)$ gives the exponential concentration rate.
\end{proof}

\begin{corollary}[Shell confinement]
\label{cor:shell}
For $\epsilon>0$, define the typical set
\begin{equation}
  A_\epsilon^{(T)}:=\{y:|-\log P(y)-H_T|\le\epsilon T\}.
\end{equation}
Then $P(A_\epsilon^{(T)})\ge1-\exp(-\Omega(T))$ under Theorem~\ref{thm:aep}. Moreover,
\begin{equation}
  |A_\epsilon^{(T)}|\le\exp(H_T+\epsilon T).
  \label{eq:typical-upper}
\end{equation}
In other words, almost all rollouts land in a set of size $\approx \exp(H_T)$. Expansion-only search, regardless of how it selects prefixes, can only explore within this shell.
\end{corollary}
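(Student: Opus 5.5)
The corollary has two claims: a lower bound on the mass of the typical set, and an upper bound on its cardinality. Each is handled by a separate short argument.

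\textbf{Mass of the typical set.} The complement of $A_\epsilon^{(T)}$ is exactly the event $\{|S_T - H_T| > \epsilon T\}$, where $S_T = -\log P(Y)$ is evaluated at the rollout $Y \sim P$. This event is contained in $\{|S_T/T - H_T/T| \ge \epsilon\}$. Theorem~\ref{thm:aep} applies with $u = \epsilon T$ and a variance proxy $v_T = O(T)$. If nothing sharper is available, the worst case $v_T = TL^2$ from Lemma~\ref{lem:mt} already has this form. Plugging in, both exponents in Eq.~\eqref{eq:aep} are of order $\epsilon^2 T^2 / (cT) = \Omega(T)$, with constants depending only on $\epsilon$, $L$ and $C_\beta$. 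This gives $P\bigl((A_\epsilon^{(T)})^c\bigr) \le \exp(-\Omega(T))$, and hence $P(A_\epsilon^{(T)}) \ge 1 - \exp(-\Omega(T))$.

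\textbf{Cardinality of the typical set.} This is the standard AEP counting argument. Every $y \in A_\epsilon^{(T)}$ satisfies $-\log P(y) \le H_T + \epsilon T$, so $P(y) \ge \exp(-(H_T + \epsilon T))$. Summing over the set,
\[
1 \ \ge\ \sum_{y \in A_\epsilon^{(T)}} P(y) \ \ge\ |A_\epsilon^{(T)}| \, \exp\bigl(-(H_T + \epsilon T)\bigr),
\]
which rearranges to Eq.~\eqref{eq:typical-upper}. The typical set is defined through $-\log P(y)$, so it contains only trajectories with $P(y) > 0$, and there is no issue with zero-probability sequences.

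\textbf{Consequence for expansion-only search.} Every terminal candidate produced by expansion (Eq.~\eqref{eq:expand}) is drawn step by step from $\pi_\theta$, starting from some prefix. The prefix itself is built by the same conditionals, so each candidate's steps are generated by the policy's own conditionals. The prefix-selection rule, however, may depend on earlier outcomes, so a completed candidate need not be distributed exactly as $P$. I would therefore state the informal claim as a statement about marginal rollouts: any single candidate whose law is $P$ lands in the shell except with probability $\exp(-\Omega(T))$. A union bound over a polynomial number of candidates keeps the total failure probability exponentially small.

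The main obstacle is this last step, which must rule out adaptivity pushing candidates out of the shell. I would not attempt a full adaptive analysis here and would treat this part as interpretive. The two displayed claims require only the first two steps.
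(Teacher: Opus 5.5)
Your proposal is correct and matches the paper's proof: the mass bound is read off from Theorem~\ref{thm:aep} with $u=\epsilon T$, and the cardinality bound is the same summation $1\ge |A_\epsilon^{(T)}|\exp(-(H_T+\epsilon T))$. Your observation that $v_T=TL^2$ always supplies $v_T=O(T)$ (noted in Lemma~\ref{lem:mt}) is a useful addition, and your caveat about adaptive prefix selection is fair, since the paper does not prove the informal last sentence either.
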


\begin{proof}
The probability statement follows from Theorem~\ref{thm:aep}. Every $y\in A_\epsilon^{(T)}$ satisfies $P(y)\ge\exp(-(H_T+\epsilon T))$, so summing probabilities gives $1\ge |A_\epsilon^{(T)}|\exp(-(H_T+\epsilon T))$, which proves Eq.~\eqref{eq:typical-upper}.
\end{proof}

\subsubsection{Shell Escape via Evolution}

We now show that evolution operators construct candidates whose expected log-probability falls strictly outside the entropy shell. We prove the results using crossover; the analysis for other evolution operators (combination, translocation) follows analogously.

\begin{lemma}[Splice-point KL identity]
\label{lem:splice}
Split a trajectory at position $s$ and write $V=Y_{1:s}$ and $U=Y_{s+1:T}$. Let $(V,U)$ and $(V',U')$ be two independent policy rollouts, and form the crossover trajectory $\widetilde{Y}=(V,U')$. The expected increase in native surprise is
\begin{equation}
  \mathbb{E}[-\log P(V,U')]-H_T
  =\mathbb{E}_{V,V'}
  \left[D_{\mathrm{KL}}(P_{U\mid V'}\|P_{U\mid V})\right].
  \label{eq:splice-kl}
\end{equation}
Equivalently,
\begin{equation}
  \mathbb{E}[-\log P(V,U')]-H_T
  =I_P(V;U)+D_{\mathrm{KL}}(P_V\otimes P_U\|P_{V,U})
  \ge I_P(V;U).
  \label{eq:splice-mi}
\end{equation}
The gap is strictly positive whenever the suffix distribution $P_{U\mid V}$ depends on the prefix.
\end{lemma}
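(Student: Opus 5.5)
The plan is to compute the expected surprise of the spliced trajectory directly, using the independence of the two rollouts, and then rewrite it in two ways. Since $(V,U)$ and $(V',U')$ are independent copies, the pair $(V,U')$ has law $P_V\otimes P_U$. Its expected native surprise factors by the chain rule as
\begin{equation*}
  \mathbb{E}[-\log P(V,U')]
  =\mathbb{E}[-\log P_V(V)]+\mathbb{E}[-\log P_{U\mid V}(U'\mid V)]
  =H_P(V)+\mathbb{E}[-\log P_{U\mid V}(U'\mid V)].
\end{equation*}
The first term uses only the marginal of $V$. For the second term, condition on $V'$, so that $U'\sim P_{U\mid V'}$ independently of $V$. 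Then write $-\log P_{U\mid V}(U'\mid V)=-\log P_{U\mid V'}(U'\mid V')+\log\frac{P_{U\mid V'}(U'\mid V')}{P_{U\mid V}(U'\mid V)}$. Taking the expectation over $U'$ given $(V,V')$ turns the first piece into $H(U\mid V=V')$ and the second into $D_{\mathrm{KL}}(P_{U\mid V'}\|P_{U\mid V})$. Averaging over $V'$ gives $H_P(U\mid V)$, and $H_P(V)+H_P(U\mid V)=H_T$. This proves Eq.~\eqref{eq:splice-kl}.

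For Eq.~\eqref{eq:splice-mi}, I would use the cross-entropy identity $\mathbb{E}_{Q}[-\log P]=H(Q)+D_{\mathrm{KL}}(Q\|P)$ with $Q=P_V\otimes P_U$ and $P=P_{V,U}$. Here $H(Q)=H_P(V)+H_P(U)$, so subtracting $H_T=H_P(V,U)$ leaves $I_P(V;U)+D_{\mathrm{KL}}(P_V\otimes P_U\|P_{V,U})$. The inequality $\ge I_P(V;U)$ then follows from nonnegativity of KL.

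Strict positivity comes from either form. If $P_{U\mid V}$ depends on $V$, then $I_P(V;U)>0$, since mutual information vanishes exactly under independence. Alternatively, with positive probability $V,V'$ land on prefixes whose suffix laws differ, making the KL term positive.

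The main obstacle is absolute continuity. $D_{\mathrm{KL}}(P_V\otimes P_U\|P_{V,U})$ and the surprise of the spliced trajectory may be infinite if $(V,U')$ has zero probability under $P$. I would either allow the value $+\infty$, which keeps the identity and the inequality valid in the extended reals, or restrict to the case where the product measure is absolutely continuous with respect to $P_{V,U}$. In the latter case all terms are finite, bounded by $LT$ via Assumption~\ref{assump:bounded}. The remaining steps are routine once the conditional expectations are justified on a finite step space.
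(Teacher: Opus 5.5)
Your proposal is correct and follows the paper's proof essentially step for step: split $-\log P(V,U')$ by the chain rule into $H_P(V)$ plus a conditional cross-entropy, condition on $V'$ to rewrite the latter as $H_P(U\mid V)$ plus the averaged KL, then apply the cross-entropy identity with $Q=P_V\otimes P_U$ for the mutual-information form. Two of your additions go slightly beyond the paper: your argument for strict positivity via $I_P(V;U)>0$ (the paper asserts positivity without proof), and your absolute-continuity caveat (the paper tacitly treats the spliced surprisal as finite, and later relies on this when bounding $-\log P(\widetilde{Y})\le LT$).
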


\begin{proof}
Using $P(V,U)=P_V(V)P_{U\mid V}(U)$,
\begin{align}
  \mathbb{E}[-\log P(V,U')]
  &=H(V)+\mathbb{E}_{V,V'}\mathbb{E}_{U\sim P_{U\mid V'}}[-\log P_{U\mid V}(U)],\\
  H_T
  &=H(V)+\mathbb{E}_{V'}H(P_{U\mid V'}).
\end{align}
Subtracting gives Eq.~\eqref{eq:splice-kl}. The marginal law of $(V,U')$ is $P_V\otimes P_U$, so the same difference equals the cross-entropy gap $H(P_V\otimes P_U,P_{V,U})-H(P_{V,U})$, which is
$H(P_V\otimes P_U)+D_{\mathrm{KL}}(P_V\otimes P_U\|P_{V,U})-H(P_{V,U})$.
Since $H(P_V\otimes P_U)-H(P_{V,U})=I_P(V;U)$, Eq.~\eqref{eq:splice-mi} follows.
\end{proof}

Lemma~\ref{lem:splice} analyzes crossover at a single splice point. We now generalize to $k$-way block evolution, where a trajectory is partitioned into multiple blocks and each block is drawn from a different seed trajectory.

Fix a partition $0=s_0<s_1<\cdots<s_k=T$ and define blocks $U_j=Y_{s_{j-1}+1:s_j}$. Let $P$ denote the joint law of $(U_1,\ldots,U_k)$ under the policy, and let $P_j$ be the marginal law of $U_j$. A $k$-way blockwise evolution draws $k$ independent seed trajectories and takes block $j$ from seed $j$. The resulting distribution is
\begin{equation}
  Q=\bigotimes_{j=1}^{k}P_j.
  \label{eq:product-law}
\end{equation}

\begin{lemma}[Block evolution cross-entropy gap]
\label{lem:tc}
For $\widetilde{Y}\sim Q$,
\begin{equation}
  \mathbb{E}_{Q}[-\log P(\widetilde{Y})]-H(P)
  =\mathrm{TC}_P(U_1,\ldots,U_k)+D_{\mathrm{KL}}(Q\|P)
  \ge\mathrm{TC}_P(U_1,\ldots,U_k),
  \label{eq:tc-gap}
\end{equation}
where $\mathrm{TC}_P(U_1,\ldots,U_k) := \sum_{j=1}^{k}H_P(U_j)-H_P(U_1,\ldots,U_k)$ is the block total correlation.
\end{lemma}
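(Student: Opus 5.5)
The identity is a cross-entropy decomposition, proved the same way as Lemma~\ref{lem:splice} but with $k$ blocks. The plan is to write the expected native surprise of $\widetilde{Y}\sim Q$ as the cross-entropy $H(Q,P):=\mathbb{E}_Q[-\log P(\widetilde{Y})]$. By the standard identity,
\begin{equation*}
  H(Q,P)=H(Q)+D_{\mathrm{KL}}(Q\|P),
\end{equation*}
which follows by adding and subtracting $\log Q(\widetilde{Y})$ inside the expectation. We then subtract $H(P)=H_P(U_1,\ldots,U_k)$ from both sides.

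Next we compute $H(Q)$. Since $Q=\bigotimes_{j}P_j$ is a product of the block marginals, entropy is additive, so
\begin{equation*}
  H(Q)=\sum_{j=1}^{k}H(P_j)=\sum_{j=1}^{k}H_P(U_j).
\end{equation*}
Hence
\begin{equation*}
  H(Q)-H(P)=\sum_j H_P(U_j)-H_P(U_1,\ldots,U_k)=\mathrm{TC}_P(U_1,\ldots,U_k).
\end{equation*}
This gives the equality in Eq.~\eqref{eq:tc-gap}, and the inequality follows from nonnegativity of KL divergence (Gibbs' inequality).

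The only point needing care is well-definedness. The cross-entropy could be infinite if $Q$ charges sequences with $P=0$, since a spliced trajectory need not be reachable under $P$. In that case both sides equal $+\infty$ and the statement holds trivially. When $Q\ll P$, all quantities are finite because the step space is finite under Assumption~\ref{assump:bounded}, so each block entropy is at most $|U_j|\,L$ and the manipulations are legitimate.

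I expect no real obstacle; the main step is just recognizing that $H(Q)$ splits into a sum of block entropies. As a sanity check, setting $k=2$ recovers Eq.~\eqref{eq:splice-mi}, since $\mathrm{TC}_P(V,U)=I_P(V;U)$.
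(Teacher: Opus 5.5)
Your proof is correct and is the same argument the paper gives: write the expected surprise as the cross-entropy $H(Q,P)=H(Q)+D_{\mathrm{KL}}(Q\|P)$, use additivity of entropy under the product law $Q=\bigotimes_j P_j$ to get $H(Q)=\sum_j H_P(U_j)$, subtract $H(P)$, and invoke Gibbs' inequality. One minor slip in your finiteness remark: a block's marginal entropy need not be at most $|U_j|L$, since a late block can encode the whole prefix; the bound $H_P(U_j)\le H_P(U_1,\ldots,U_k)\le TL$ still gives the finiteness you need.
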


\begin{proof}
The expected surprise of an evolution sample under the policy is the cross entropy $H(Q,P)=H(Q)+D_{\mathrm{KL}}(Q\|P)$. Since $Q$ is the product of the policy's block marginals, $H(Q)=\sum_j H_P(U_j)$. Subtracting $H(P)=H_P(U_1,\ldots,U_k)$ gives Eq.~\eqref{eq:tc-gap}.
\end{proof}

We can now state the main theorem.

\noindent\textbf{Theorem~\ref{thm:shell}} (Shell confinement and escape)\textbf{.}
\textit{Restated from Section~\ref{sec:theory-mutation}.}
\medskip

\noindent Under Assumptions~\ref{assump:bounded}--\ref{assump:tc}, define the typical set $A_\epsilon^{(T)}:=\{y:|-\log P(y)-H_T|\le\epsilon T\}$.

\textit{(a) Shell confinement.} Every trajectory $Y \sim P$ produced by expansion satisfies $\Pr[Y \notin A_\epsilon^{(T)}] \leq \exp(-\Omega(T))$. That is, expansion-only search is confined to a typical set of size at most $\exp(H_T + \epsilon T)$.

\textit{(b) Shell escape.} Let $Q = \bigotimes_{j=1}^k P_j$ be the $k$-way evolution distribution. For any $\epsilon < \gamma$,
\begin{equation*}
  \mathbb{E}_Q[-\log P(\widetilde{Y})] \;\geq\; H_T + \gamma T \;>\; H_T + \epsilon T,
\end{equation*}
so evolution candidates have expected log-probability strictly beyond the shell boundary. Moreover,
\begin{equation*}
  \Pr_Q\!\left[\widetilde{Y} \in A_\epsilon^{(T)}\right] \;\leq\; 1 - \frac{(\gamma - \epsilon)T}{LT - H_T - \epsilon T} \;<\; 1,
\end{equation*}
confirming that a positive fraction of evolution candidates escape the shell.

\begin{proof}
Part~(a) is Corollary~\ref{cor:shell}.

For part~(b), Lemma~\ref{lem:tc} gives $\mathbb{E}_Q[-\log P(\widetilde{Y})] = H_T + \mathrm{TC}_P + D_{\mathrm{KL}}(Q \| P) \geq H_T + \gamma T$ by Assumption~\ref{assump:tc}, proving Eq.~\eqref{eq:escape-expectation}.

For Eq.~\eqref{eq:escape-prob}, let $p = \Pr_Q[\widetilde{Y} \in A_\epsilon^{(T)}]$. By Assumption~\ref{assump:bounded}, $-\log P(\widetilde{Y}) \in [0, LT]$ always. If $\widetilde{Y} \in A_\epsilon^{(T)}$ then $-\log P(\widetilde{Y}) \leq H_T + \epsilon T$. Therefore
\begin{equation}
  H_T + \gamma T \;\leq\; \mathbb{E}_Q[-\log P(\widetilde{Y})] \;\leq\; (H_T + \epsilon T)\,p + LT\,(1 - p).
\end{equation}
Rearranging: $p \leq \frac{LT - H_T - \gamma T}{LT - H_T - \epsilon T} = 1 - \frac{(\gamma - \epsilon)T}{LT - H_T - \epsilon T}$, which is strictly less than $1$ whenever $\gamma > \epsilon$.
\end{proof}

\subsection{Theoretical Motivation for Bidirectional Search}
\label{app:theory-bidirectional}

\noindent\textbf{Theorem~\ref{thm:bidir}} (Exponential advantage from backward sub-goal signals)\textbf{.}
\textit{Restated from Section~\ref{sec:theory-bidirectional}.}
\medskip

\noindent Let $N$ candidates be sampled independently. Terminal-only search requires
$N_{\mathrm{term}} = \Omega\!\left(1 / \prod_{i=1}^m p_i\right)$
candidates to obtain constant success probability. By contrast, backward-guided bidirectional search requires only
$N_{\mathrm{bidir}} = O\!\left(p_{\min}^{-1}\log(m/\delta)\right)$,
where $p_{\min}=\min_i p_i$,
to collect evidence for all sub-goals with probability at least $1-\delta$. In the symmetric case $p_i=p$, the ratio is
$N_{\mathrm{term}} / N_{\mathrm{bidir}} = \Omega\!\left(p^{-(m-1)} / \log(m/\delta)\right)$,
which is exponential in the number of sub-goals $m$.

\begin{proof}
For a single candidate, terminal success requires all sub-goals:
\[
  \Pr[V(x,n)=1]
  \le
  \Pr\!\left[\bigcap_{i=1}^m \{C_i(n)=1\}\right]
  =
  \prod_{i=1}^m p_i.
\]
Therefore, after \(N\) independent candidates, the probability that a
terminal-only method observes a complete solution is at most
\[
  1-\left(1-\prod_{i=1}^m p_i\right)^N.
\]
To make this probability constant, one needs
\[
  N=\Omega\!\left(\frac{1}{\prod_{i=1}^m p_i}\right).
\]

Now consider backward verification. The pool contains evidence for all
sub-goals if, for every \(i\), at least one sampled candidate satisfies
\(C_i(n)=1\). This event has probability
\[
  \prod_{i=1}^m \left(1-(1-p_i)^N\right).
\]
Using \((1-p_i)^N\le e^{-Np_i}\), we obtain
\[
  \Pr[\text{some sub-goal is missing}]
  \le
  \sum_{i=1}^m e^{-Np_i}
  \le
  m e^{-N p_{\min}}.
\]
Thus
\[
  N\ge \frac{1}{p_{\min}}\log\frac{m}{\delta}
\]
ensures that the pool contains evidence for every sub-goal with probability at
least \(1-\delta\). Once such evidence exists, backward search identifies it
and evolution can recombine the corresponding partial trajectories. Hence
bidirectional search replaces the one-shot probability
\(\prod_i p_i\) by the much larger local probabilities \(p_i\), turning a
multiplicative hitting problem into a sub-goal collection problem. In the
symmetric case \(p_i=p\), terminal-only search needs
\(\Omega(p^{-m})\) candidates, whereas bidirectional search needs
\(O(p^{-1}\log(m/\delta))\), giving the stated exponential advantage.
\end{proof}

\section{Detailed Experimental Setup}
\label{app:setup}

\subsection{Logical Reasoning}
\label{app:logical_setup}

\paragraph{Resources.}
The trainer uses 2 H200 GPUs. A separate auxiliary GPU hosts a vLLM
server for the backward decomposer (a copy of Gemma-3-1B-it serving the
\textsc{Decompose} calls of the goal tree); the
trainer reaches it over HTTP.

\paragraph{Data.}
We generate the K\&K corpus with the official sampling pipeline. The SFT
cold-start corpus contains $1{,}000$ problems with $n_\mathrm{people}\in\{2,3,4\}$.
The post-training corpus contains $5{,}000$ problems with
$n_\mathrm{people}\in\{4,5,6\}$.
Validation contains $143$ problems per difficulty level
$n_\mathrm{people}\in\{2,\dots,10\}$ for a total of $1{,}287$ problems.

\paragraph{Training schedule.}
Stage~1 is a 3-epoch supervised fine-tuning on the SFT corpus to teach the
output format. Stage~2 is post-training. For \ours, each training step runs the forward-backward
search on every problem in the training batch, returning either eight
unique terminal trajectories (when the search hits the budget or finds
enough successes) or fewer, in which case the remaining slots are padded
with single-rollout samples to keep the GRPO group size fixed at $8$;
the trainer then conducts post-training on these samples. We compare against two baselines (GRPO and MaxRL),
both of which sample $8$ i.i.d.\ trajectories per problem.

\paragraph{Forward search.}
The forward search maintains a pool of partial reasoning trajectories
partitioned at paragraph granularity (\verb|\n\n|-separated reasoning
steps). At every search step we sample one of five actions:
\emph{combine} with probability $0.10$; \emph{deletion} with probability $0.05$; \emph{translocation} with probability $0.075$; \emph{crossover}
with probability $0.075$; and \emph{expansion}
with probability $0.70$.
The Boltzmann temperature $\tau$ is annealed linearly from
$\tau_0=2.0$ at step $0$ to $\tau_\mathrm{end}=1.0$ at step $B{-}2$.
A trajectory becomes \emph{terminal} when it contains a
``\verb|### Final Answer|'' marker followed by a parseable JSON map
\verb|{name: 0/1}|; the rule-based KK scorer then assigns reward
$r\in\{0,1\}$ from exact match against the ground-truth assignment. The
search runs until $B=200$ policy calls have been used or eight unique
terminal trajectories have been found.

\paragraph{Backward search.}
At the start of every \ours\ search on a puzzle we instantiate a goal
tree whose root goal is to identify every person's role correctly,
verified by the rule-based K\&K scorer applied to the trajectory's
final answer. The root expands into one sub-goal per person (determining whether that person is a knight or a knave) and each
per-person sub-goal further expands into elementary verification
strategies that human solvers commonly apply to K\&K puzzles, e.g.\
assume the opposite role and seek a contradiction, assume the correct
role and confirm consistency, or jointly fix two people's roles and
check pair-wise consistency. Leaf sub-goals are verified via
lightweight syntactic checks on the trajectory against the reasoning
markers each strategy is expected to produce.

Because Gemma-3-1B-it is too small to reliably perform open-ended goal
decomposition, we restrict the backward LLM's
role to scheduling traversal of this template tree rather than
constructing it. At the start of the search the decomposer is asked to
choose the order in which the per-person sub-goals should be verified;
every $D=10$ search steps thereafter it is asked to pick which
verification strategies should be activated under the next-in-line
per-person sub-goal. 
The recursive node score $s(n)$ is recomputed on every search step using $\alpha=0.3$.

\paragraph{Hyperparameters.}
Hyperparameters are listed in Table \ref{tab:hyperparams_logical}.

\begin{table*}[h]
\centering
\caption{Hyperparameters for the logical reasoning experiment (Knights-and-Knaves).}
\label{tab:hyperparams_logical}
\begin{tabularx}{\textwidth}{@{}lX@{}}
\toprule
\textbf{Category} & \textbf{Hyperparameter $=$ Value} \\
\midrule
\textbf{Model} &
backbone $=$ Gemma-3-1B-it; cold-start $=$ 3 epochs SFT on $1\,$K problems; \\
& post-training $=$ 4 epochs on $5\,$K problems. \\
\midrule
\textbf{Optimization} &
optimizer $=$ AdamW; learning\_rate $=$ $1\!\times\!10^{-6}$; \\
& train\_batch\_size $=$ 32; ppo\_mini\_batch\_size $=$ 32; \\
& ppo\_micro\_batch\_size\_per\_gpu $=$ 16; ppo\_epochs $=$ 1; \\
& clip\_ratio $=$ 0.2; grad\_clip $=$ 0.3; kl\_coef $=$ 0.0. \\
\midrule
\textbf{Generation} &
max\_prompt\_length $=$ 1024; max\_response\_length $=$ 4096; \\
& max\_model\_len $=$ 6144; train\_temperature $=$ 1.0; \\
& val\_temperature $=$ 0.6; val\_top\_p $=$ 0.95. \\
\midrule
\textbf{BES search} &
adv\_estimator $=$ \texttt{maxrl}; group\_size $=$ 8 trajectories/problem; \\
& search\_budget $=$ 200 policy calls/problem; \\
& decompose\_interval $=$ 10 search steps;  \\
\bottomrule
\end{tabularx}
\end{table*}

\subsection{Multi-Hop Reasoning}
\label{app:qa_setup}

\paragraph{Resources.}
The trainer uses 2 H200 GPUs. Two auxiliary
servers run independently and communicate with the trainer over HTTP:
(i) a retriever (E5 encoder + FAISS over the 2018 Wikipedia dump) on
1 H200 GPU, and (ii) a backward decomposition server hosting
Llama-3.1-8B-Instruct on 1 H200 GPU.

\paragraph{Data.}
We use the answerable subset of MuSiQue. The training set is the
3-to-4-hop solvable split of the MuSiQue training data and is held
fixed across methods; the validation set is the full official MuSiQue
validation set. 

\paragraph{Training schedule.}
Two epochs of post-training. At each
rollout the \ours\ search returns $8$ trajectories per problem; the same per-problem budget is used by the GRPO
and Tree-GRPO baselines for fair comparison. Two epochs are sufficient;
extra epochs lead to overfitting and training collapse.

\paragraph{Action format and reward.}
The agent emits actions as
\verb|<think>...</think>| followed by either \verb|<search>q</search>|
or \verb|<answer>...</answer>|. Each \verb|<search>| call goes to the offline
retriever which returns the top $3$ passages back to the agent inside
\verb|<information>...</information>| tokens.

\paragraph{Forward search.}
The forward search maintains a candidate pool of trajectories per
question, where every trajectory is a sequence of
$(\langle\texttt{think}\rangle,\langle\texttt{search}\rangle,
\langle\texttt{information}\rangle)$ triples optionally followed by a
terminal $(\langle\texttt{think}\rangle,\langle\texttt{answer}\rangle)$
pair. Strict format validation is enforced on every expansion.

At every search step we sample one of five actions with the same
mixture used in the logical-reasoning experiment:
\emph{combination} $0.10$, \emph{deletion} $0.05$, \emph{translocation}
$0.075$, \emph{crossover} $0.075$, \emph{expansion} $0.70$. The evolution operators here operate at the triple boundary. The Boltzmann temperature $\tau$ is annealed linearly from
$\tau_0=1.5$ at step $0$ to $\tau_\mathrm{end}=0.3$ at the budget. We pick $\alpha=0.7$.

\paragraph{Backward search.}
Backward search decomposes each question into an ordered chain of atomic sub-questions. The
original question together with these sub-questions defines the
per-question goal tree.
The local verifier $V_{g_i}$ is queried by the forward search for
every candidate trajectory at every step, so a per-call LLM verifier would be impractical. We therefore
instantiate $V_{g_i}$ as a fast embedding model: every
$\langle\texttt{search}\rangle$ query that a trajectory $n$ has emitted
is embedded into the $\texttt{all-MiniLM-L6-v2}$ sentence-embedding
space, and the same is done for each sub-question. A sub-question
$g_i$ is declared \emph{covered} by trajectory $n$ when
\begin{equation}
\max_{q \in \mathrm{searches}(n)}\,
  \cos\bigl(\mathrm{emb}(q),\mathrm{emb}(g_i)\bigr) \;\ge\; \sigma_\mathrm{cov} = 0.6,
\end{equation}
i.e.\ when at least one of the trajectory's search queries
semantically aligns with the sub-question. 

Since a later sub-question's answer typically depends on those of the earlier ones, we evaluate sub-goals sequentially: a sub-goal is only checked if all preceding sub-goals have been satisfied.

\paragraph{Hyperparameters.}
Hyperparameters are listed in Table \ref{tab:hyperparams_multihop}.

\begin{table*}[h]
\centering
\caption{Hyperparameters for the multi-hop reasoning experiment (MuSiQue).}
\label{tab:hyperparams_multihop}
\begin{tabularx}{\textwidth}{@{}lX@{}}
\toprule
\textbf{Category} & \textbf{Hyperparameter $=$ Value} \\
\midrule
\textbf{Model} &
backbones $=$ \{Llama-3.2-3B-Instruct, Llama-3.1-8B-Instruct\}; \\
& post-training $=$ 2 epochs on the 3--4-hop solvable MuSiQue split. \\
\midrule
\textbf{Optimization} &
optimizer $=$ AdamW; learning\_rate $=$ $1\!\times\!10^{-6}$; \\
& lr\_warmup\_ratio $=$ 0.285; \\
& train\_batch\_size $=$ 128; val\_batch\_size $=$ 32; \\
& ppo\_mini\_batch\_size $=$ 16; ppo\_micro\_batch\_size $=$ 8; \\
& kl\_loss\_coef $=$ $1\!\times\!10^{-3}$; kl\_loss\_type $=$ low\_var\_kl. \\
\midrule
\textbf{Generation} &
max\_prompt\_length $=$ 4096; max\_response\_length $=$ 2048; \\
& max\_obs\_length $=$ 500; temperature $=$ 1.0; \\
&max\_turns $=$ 3. \\
\midrule
\textbf{BES search} &
adv\_estimator $=$ \texttt{grpo}; group\_size $=$ 8 trajectories/problem; \\
& search\_budget $=$ 50 policy calls/problem; $K$-parallel $=$ 4; \\
& embedder $=$ all-MiniLM-L6-v2; sim\_threshold\_$\sigma_\mathrm{cov}$ $=$ 0.6; \\
\midrule
\textbf{Retriever} &
encoder $=$ intfloat/e5-base-v2; index $=$ wiki-18 FAISS; topk $=$ 3. \\
\bottomrule
\end{tabularx}
\end{table*}

\subsection{Open Problem Solving}
\label{app:inference_setup}

\paragraph{Resources.}
All compute is on a single CPU node; LLM access is through the OpenAI
API. We use \texttt{gpt-5} with \texttt{reasoning\_effort = high} for
forward proposals, meta-reasoning, and backward decomposition. Each run
is capped at \$50 of API spend; baseline frameworks (OpenEvolve, GEPA,
ShinkaEvolve) are run under the same setting and we directly use the results from Skydiscover~\citep{skydiscover}.

\paragraph{Benchmarks.}
Three open optimization tasks: (i) \emph{Circle Packing (Square)}:
pack $n=26$ non-overlapping circles in the unit square to maximize the
sum of radii; (ii) \emph{Circle Packing (Rect)}: the same in a
fixed-aspect rectangle; (iii) \emph{Heilbronn (Convex, $n=13$)}: place
$13$ points in the unit square to maximize the minimum area of any
convex polygon formed by a subset of the points. We report mean and
best objective value across $3$ runs per benchmark.

\paragraph{Algorithm.}
We adopt ShinkaEvolve as the base program-evolution framework. A
population of executable Python programs is maintained in an islanded
SQLite archive; at each generation, parents are sampled from the
archive, mutated by an LLM-driven proposer, and the resulting offspring
are evaluated against the benchmark scorer. BES adds two components on
top: (a) the four evolution operators (combination, deletion, translocation,
crossover), realized as LLM-driven \emph{joint rewrites} of two parent
programs (since direct concatenation is not meaningful for executable
programs), and (b) a backward goal tree that supplies dense
intermediate scores.

\paragraph{Forward search.}

The four evolution operators of the main paper --- combination,
translocation, crossover, deletion --- are realized at the
program level. Direct token-level concatenation is not
meaningful for executable programs, so we pass both parents to the proposer with an operator-specific instruction (e.g.\ ``combine the
structural decisions of program A with the parameter choices of
program B'') and ask it to return a single new program implementing the requested edit. Detailed prompts are listed in Appendix \ref{app:prompts:forward}.

\paragraph{Backward search.}
In the backward goal tree, each leaf carries a Python verifier
expression that returns a continuous partial-progress score in
$[0,1]$, evaluated against a benchmark-supplied namespace describing the program's outputs. Detailed prompts are listed in Appendix \ref{app:prompts:backward}.

Tree growth is performed adaptively during the run. A decomposition pass is triggered whenever
the population has failed to improve on the raw objective for $S=5$
consecutive generations (improvement margin $\Delta=10^{-2}$): we
pick a leaf that no program in the archive has yet fully satisfied,
and ask the decomposer LLM  to
propose 2--4 child sub-goals that together cover the leaf, each with
its own natural-language description and Python verifier. This
adaptive schedule lets the tree grow into the kinds of bottlenecks the population actually encounters.

When calculating the score, in order not to hurt ground truth signals, we rank programs by a
\emph{bucket-interpolation} effective score: programs are first
bucketed by the raw objective at precision $10^{-2}$ and ranked by
bucket; within a bucket, the recursive backward score acts as an
intra-bucket sub-rank, scaled so that it can never push a program
past the next bucket boundary. This guarantees that any improvement
in the raw objective dominates any change in the backward signal, so
the backward score mainly acts as a tie-breaker among programs of similar raw scores.

\paragraph{Hyperparameters.}
Hyperparameters are listed in Table \ref{tab:hyperparams_inference}.

\begin{table*}[h]
\centering
\caption{Hyperparameters for the open problem solving experiment.}
\label{tab:hyperparams_inference}
\begin{tabularx}{\textwidth}{@{}lX@{}}
\toprule
\textbf{Category} & \textbf{Hyperparameter $=$ Value} \\
\midrule
\textbf{LLM} &
backbone $=$ gpt-5; reasoning\_effort $=$ high. \\
\midrule
\textbf{Evolution} &
num\_generations $=$ 100; max\_evaluation\_jobs $=$ 2; \\
& max\_proposal\_jobs $=$ 2; max\_db\_workers $=$ 2; \\
& max\_patch\_resamples $=$ 3; max\_patch\_attempts $=$ 2. \\
\midrule
\textbf{Operators} &
patch\_types $=$ \{diff, full, cross\} with probs $\{0.6, 0.3, 0.1\}$; \\
& BES operators $=$ \{combine, deletion, translocation, crossover\}. \\
\midrule
\textbf{Database} &
num\_islands $=$ 1; archive\_size $=$ 40; \\
& parent\_selection\_strategy $=$ weighted; parent\_selection\_lambda $=$ 10; \\
& effective\_score $=$ bucket-interpolation (precision $10^{-2}$). \\
\midrule
\textbf{BES backward} &
trigger $=$ stagnation; $S=5$ generations; margin $\Delta=10^{-2}$; \\
& children\_per\_decompose $=$ 2--4; max\_tree\_depth $=$ 2; \\
& recursive\_blend\_$\alpha$ $=$ 0.3; monotonic\_expansion $=$ True. \\
\bottomrule
\end{tabularx}
\end{table*}

\section{Case Study}
\label{app:case}

We illustrate the \ours\ search process on a multi-hop question answering example from the MuSiQue dataset. The question is: \textit{``What is the record label of the artist who originally recorded Back to Bedlam?''} The correct answer is Custard Records. Figure~\ref{fig:case} shows the full search trace.

\begin{figure}[h]
\centering
\includegraphics[width=\linewidth]{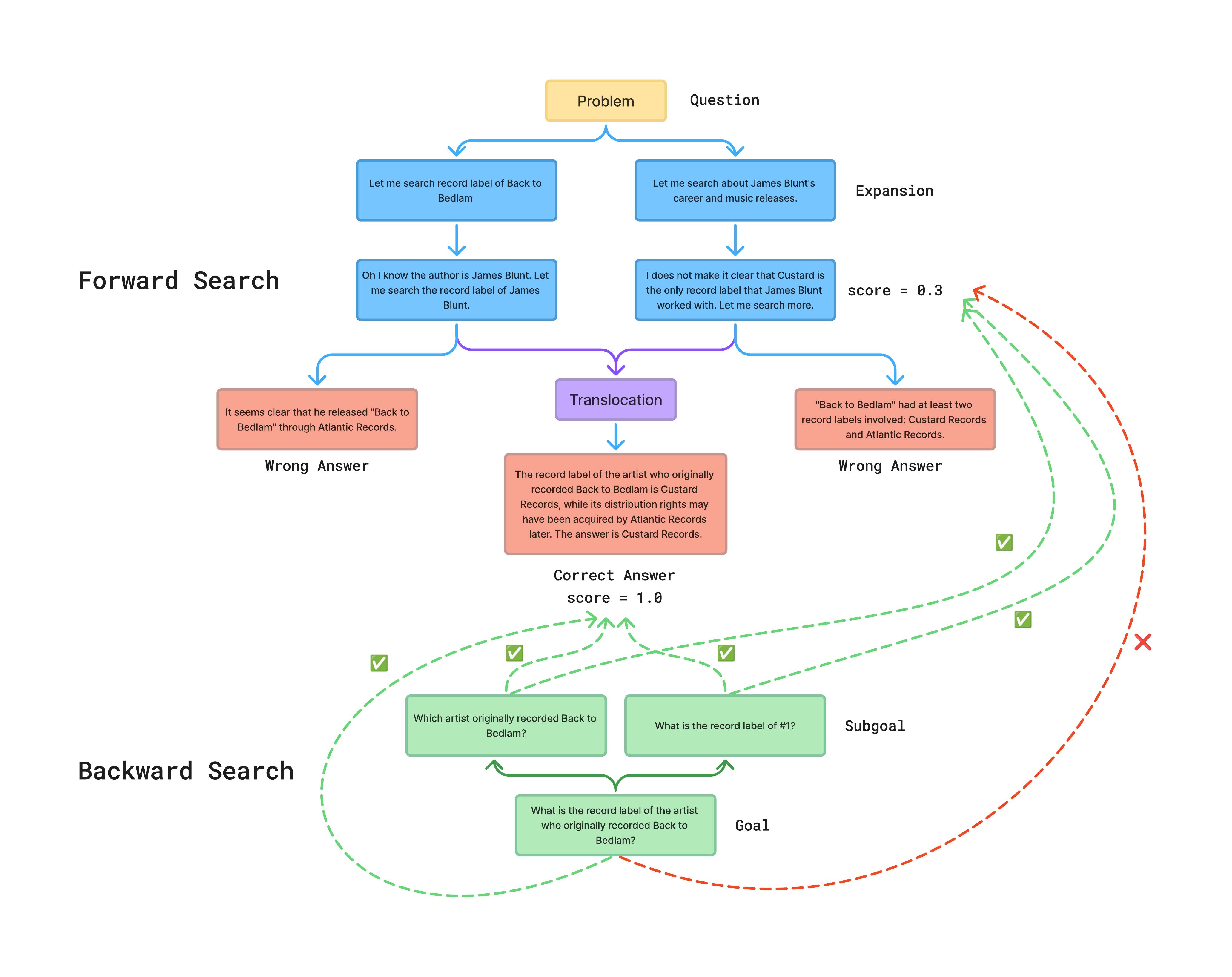}
\caption{Case study of \ours\ on a multi-hop reasoning problem. The forward search (top) explores two branches via expansion, both of which lead to wrong answers. A translocation operator then combines a reasoning step from the right branch into the left branch, producing a correct answer. The backward search (bottom) decomposes the original question into two sub-goals and provides dense verification feedback (green/red arrows) to guide parent selection.}
\label{fig:case}
\end{figure}

\paragraph{Backward search.} The backward search decomposes the original question into two sub-goals: (1) \textit{``Which artist originally recorded Back to Bedlam?''} and (2) \textit{``What is the record label of \#1?''}. Each sub-goal is equipped with a local verifier based on embedding similarity (Section~\ref{app:qa_setup}). This decomposition allows the search to track partial progress: a candidate that correctly identifies the artist (James Blunt) but retrieves the wrong label still receives a non-zero score, providing a useful signal for parent selection.

\paragraph{Forward search: expansion.} The search begins with two expansion branches from the root. The left branch searches for the record label of ``Back to Bedlam'' directly and concludes that the album was released through Atlantic Records, which is incorrect. The right branch takes a broader approach, searching for James Blunt's career and music releases, and discovers that both Custard Records and Atlantic Records were involved. However, it also fails to produce the correct final answer, arriving at a wrong conclusion.

\paragraph{Forward search: translocation.} At this point, the backward search scores both branches. The right branch receives a score of 0.3, indicating partial progress: it has identified relevant information about Custard Records but has not resolved the question correctly. The key moment occurs when the translocation operator replaces the reasoning step in the left branch (which concluded Atlantic Records) with the more nuanced reasoning step from the right branch (which noted both Custard Records and Atlantic Records). The resulting candidate inherits the left branch's identification of James Blunt as the artist and the right branch's awareness of the Custard Records connection. This recombined trajectory correctly concludes that the original record label is Custard Records, achieving a score of 1.0.

This example highlights two core advantages of \ours. First, the backward search provides dense intermediate feedback: even though both initial branches produce wrong final answers, the sub-goal scores distinguish the branch that has made more relevant progress (score 0.3), enabling informed parent selection. Second, the translocation operator constructs a correct trajectory by transplanting a single useful reasoning step from one branch into another. Neither branch alone would have reached the correct answer through further expansion, but their combination via translocation succeeds. This illustrates how evolution operators can discover solutions beyond what any single policy rollout can produce.

\section{Prompts for Open Problem Solving Tasks}
\label{app:prompts}

This appendix lists the prompts used by \ours{} on the open-problem
benchmarks. The backward-search decomposition prompt (one per benchmark)
elicits a JSON array of verifiable sub-goals from a goal-tree leaf, while the
four evolution-operation prompts (\textsc{diff}, \textsc{diff\_ablate},
\textsc{full}, \textsc{cross}) drive forward-search mutations. All prompts
use Python \texttt{str.format}-style placeholders such as
\texttt{\{code\_content\}}, \texttt{\{performance\_metrics\}},
\texttt{\{previous\_attempts\}}.

\subsection{Backward Search: Goal Tree Decomposition}
\label{app:prompts:backward}

The decomposition prompt is benchmark-specific. We list here the
prompts used for Circle Packing (Square); the prompts for Circle Packing (Rectangle) and Heilbronn (Convex) follow the same structure.

\begin{ttcolorbox}[Goal Tree Decomposition Prompt]
\begin{lstlisting}[style=prompt]
Decompose a circle-packing goal into smaller verifiable subgoals.

## Problem
Pack n=26 non-overlapping circles in [0,1]^2 to achieve sum_of_radii > 2.636 (STRICTLY EXCEED best-known).
Every candidate already satisfies validity (in-square, non-overlap, r>=0). Do NOT use those as subgoals.

## Elite reference layouts (top {n_elites} from current archive --- these define the search frontier)
Look ACROSS all of them to identify:
(a) structural properties EVERY elite has -> strong reference-kind subgoals
(b) properties where elites still VARY -> indicates room for improvement, target with aspirational subgoals
(c) properties NO elite has yet but a hypothetical sum_r > best solution would have -> aspirational

{elite_blocks}

## Parent goal to decompose
{goal_desc}

## Two kinds of subgoals (BOTH required)
Produce 3-4 subgoals, MIXING the two kinds below:

A. kind="reference" (1-2 subgoals): structural properties that EVERY elite above already
   has and that naive layouts demonstrably LACK (naive grid ~ 2.4 or concentric ring ~ 2.2).
   MUST be True on every elite AND False on a uniform-radii ring/grid. Generic properties
   that any symmetric layout has (e.g. "geometric center near (0.5,0.5)", "mirror symmetry")
   are BAD --- naive rings satisfy them too. Good shapes: "max radius >= some_value", "at
   least K circles with r >= some_value", "at least one pair of tangent circles with
   combined radius > some_value". Pick the threshold so EVERY elite passes.

B. kind="aspirational" (2-3 subgoals): concrete structural ideas for HOW to push sum_r
   further. These typically evaluate FALSE on most or all elites --- that is the point: they
   describe what an exceed-target solution would have but the current elites still lack.
   Each must ALSO be False on naive layouts (uniform grid ~ 2.4 or ring of 26 equal circles
   ~ 2.2); otherwise a naive layout trivially passes despite a poor sum_r. Good shapes:
   - the largest circle is strictly larger than the max max_radius across elites
     (naive baselines: grid r~0.083, ring r~0.12);
   - more circles above some "large" radius cutoff than any elite achieves;
   - tighter local packing: many pairs with center-distance < r_i + r_j + epsilon (tangent clusters);
   - a structural motif (hexagonal core, nested ring, dense corner clusters) requiring large radii.
   AVOID pure symmetry/centroid predicates --- naive baselines satisfy them trivially.
   Set thresholds slightly beyond what the best elite exhibits --- a layout that matches the
   elites still FAILS, but a layout that exceeds them can pass.

## Rules (both kinds)
- A strict sub-property of the parent --- never a rephrasing of the parent itself.
- Pick subgoals from DIFFERENT categories: radius distribution, boundary usage, spatial
  coverage, symmetry, local geometry. Don't write variants of one idea.

## verify_code: return a DENSE score in [0,1], not just bool
verify_code should evaluate to a float in [0,1] representing partial credit --- NOT a bool.
Use the form min(1.0, <actual> / <target>) so that progress toward the goal earns credit.
A bool is accepted (True->1.0, False->0.0) but wastes gradient; prefer dense.

CRITICAL FORMAT REQUIREMENT --- single Python expression only:
  verify_code is evaluated via Python's eval(). It MUST be a single expression
  (no semicolons, no multi-line x=...; y=...; result chains, no def/for/if
  statements, no newline-separated assignments). If you need intermediate values,
  inline them or use a one-shot generator/comprehension.

## Output
Each subgoal:
- kind: "reference" or "aspirational".
- description: short sentence; for aspirational, briefly say WHY it pushes.
- verify_code: Python expression returning float in [0,1] (or bool). Uses centers ((26,2)),
  radii ((26,)), n=26, sum_r=float(radii.sum()), np. Must run without error.
- expected_result: typical value across the elite reference layouts (1.0 if every elite
  meets it; a fraction < 1.0 if elites only partially meet it).

Output ONLY a JSON array:

```json
[
  {"kind": "reference",    "description": "...", "verify_code": "...", "expected_result": "..."},
  {"kind": "aspirational", "description": "...", "verify_code": "...", "expected_result": "..."}
]
```
\end{lstlisting}
\end{ttcolorbox}

\subsection{Forward Evolution Operations}
\label{app:prompts:forward}

\subsubsection{Combination}
\label{app:prompts:combination}

\begin{ttcolorbox}[Combination Iteration Prompt]
\begin{lstlisting}[style=prompt]
# Current program

Here is the current program we are trying to improve:

```{language}
{code_content}
```

Here are the performance metrics of the program:

{performance_metrics}{text_feedback_section}

# Task: trick combination

Below you will see SEVERAL inspiration programs. For EACH inspiration:
1. Identify the single most distinctive trick / mechanism it uses (a particular
   initialization, a refinement step, a numerical formulation, a heuristic, ...).
2. Decide whether that trick is compatible with the current program and is likely
   additive (i.e. attacks a different failure mode than what the current program
   already handles).

Then produce a NEW full program that is the current program PLUS the compatible
tricks from the inspirations stitched in. Be explicit in the <DESCRIPTION> about
which trick came from which inspiration and why you expect them to compose
without redundancy. Drop tricks that conflict.

IMPORTANT: This is a combination, not a free rewrite. The skeleton should follow
the current program; the inspirations only contribute identifiable plug-in tricks.

Key directions to explore:
1. The optimal arrangement may involve heterogeneous or variable-sized elements
2. Strong solutions often use hybrid global-local patterns
3. The optimization routine is critical - use models with carefully tuned parameters
4. Use scipy optimize, LP, or SLSQP to optimize variables given candidate structures
\end{lstlisting}
\end{ttcolorbox}

\subsubsection{Deletion}
\label{app:prompts:del}

\begin{ttcolorbox}[Deletion Iteration Prompt]
\begin{lstlisting}[style=prompt]
# Current program

Here is the current program. The evolution loop has been stuck on iterations of approaches similar to this one --- incremental tweaks have not been moving the score:

```{language}
{code_content}
```

Performance metrics of the current program:

{performance_metrics}{text_feedback_section}

{previous_attempts}

# Task

The current implementation has plateaued. Iterating on it further is unlikely to help. Instead:

1. Identify components of the current code that look unreasonable or that may be holding the search inside a local optimum (heuristics that don't pay off, design choices the search keeps committing to, dead branches, parameter sweeps that add little).
2. DELETE those components.
3. Rewrite the program from a fundamentally new perspective: pick an algorithm class, data structure, or strategy that the current program does NOT use, and commit fully to it.

Do not iterate on the current implementation. Do not stitch new code onto the old skeleton. Commit fully to a different approach.

A fundamental change replaces the solution representation (e.g., closed-form <-> free coordinates <-> discrete) or the search paradigm (e.g., gradient <-> sampling <-> enumeration). Swapping the optimizer, picking a sibling parametric family, or adding numerical guards are NOT fundamental changes --- they leave the search trapped.

For example, the following are structurally orthogonal algorithm classes --- two attempts in the same class are minor variants of each other no matter how the surface code differs:
- Closed-form analytical construction (orbit of a finite symmetry group, vertices of a known polytope, regular polygon, root-system points)
- Low-discrepancy / quasi-random sampling on a fixed domain (Halton, Sobol, Hammersley, sunflower spiral, Fibonacci lattice)
- Lattice / grid enumeration (G x G square grid, hexagonal lattice, crystallographic packing --- search over subsets/labels)
- Continuous local optimization on free decision variables (gradient on a smoothed objective, SLSQP / Nelder-Mead / coordinate ascent on the raw objective)
- Population-based global search (CMA-ES, Differential Evolution, Genetic Algorithm --- many parallel candidates with selection)
- Discrete combinatorial search over a finite candidate set (simulated annealing on subset selection, branch-and-bound, ILP, beam search over partial states)
- Constructive online insertion (farthest-first / k-center, max-min greedy adding one element at a time, beam search building a configuration step by step)
- Physics / relaxation methods (Lloyd / centroidal Voronoi tessellation, repulsive force fields, gradient flow with hard-margin barriers, simulated cooling on continuous coordinates)
- Algebraic / number-theoretic structure (lattice orbits of a Coxeter group, points related by a Mobius / projective map, modular-arithmetic constructions)

The list is illustrative, not exhaustive --- feel free to commit to any class outside the previous attempts, including ones not above.

In the <DESCRIPTION>: name the OLD strategy in one sentence, the NEW strategy you committed to in one sentence, what you removed, and why a clean swap (not incremental tweaks) is the right move now --- what local optimum the old strategy is stuck in and how the new one structurally avoids it.

Key directions to explore:
1. The optimal arrangement may involve heterogeneous or variable-sized elements
2. Strong solutions often use hybrid global-local patterns
3. The optimization routine is critical - use models with carefully tuned parameters
4. Use scipy optimize, LP, or SLSQP to optimize variables given candidate structures
\end{lstlisting}
\end{ttcolorbox}


\subsubsection{Crossover}
\label{app:prompts:crossover}

\begin{ttcolorbox}[Crossover Iteration Prompt]
\begin{lstlisting}[style=prompt]
# Current program

Here is the current program we are trying to improve (you will need to propose a new program with the same inputs and outputs as the original program, but with improved internal implementation):

```{language}
{code_content}
```

Here are the performance metrics of the program:

{performance_metrics}{text_feedback_section}

# Task

Perform a cross-over between the code script above and the one below. Aim to combine the best parts of both code implementations that improves the score.
Provide the complete new program code.

IMPORTANT: Make sure your rewritten program maintains the same inputs and outputs as the original program, but with improved internal implementation.

Key directions to explore:
1. The optimal arrangement may involve heterogeneous or variable-sized elements
2. Strong solutions often use hybrid global-local patterns
3. The optimization routine is critical - use models with carefully tuned parameters
4. Use scipy optimize, LP, or SLSQP to optimize variables given candidate structures
\end{lstlisting}
\end{ttcolorbox}

\subsubsection{Translocation}
\label{app:prompts:translocation}

\begin{ttcolorbox}[Translocation Iteration Prompt]
\begin{lstlisting}[style=prompt]
# Current program (the "near" parent --- keep its skeleton)

```{language}
{code_content}
```

Performance metrics: {performance_metrics}{text_feedback_section}

# Task: trick translocation from a distant relative

Below you will see ONE inspiration program drawn from the archive (a "distant
relative" --- likely structurally different from the current program). Your job:

1. Read it and pick the ONE trick that is most likely to help the current
   program --- a specific initialization, refinement step, constraint formulation,
   numerical detail, or heuristic. Be concrete; name it.
2. Transplant ONLY that trick into the current program. Keep the rest of the
   current program intact. Do NOT also fold in other ideas from the donor and
   do NOT broadly rewrite the recipient.
3. Adapt naming / signatures so the transplant compiles, but do not refactor
   surrounding code beyond what the transplant strictly requires.

Argue in the <DESCRIPTION>: which trick, why this one, and why grafting it onto
the current skeleton is more promising than full crossover.

Key directions to explore:
1. The optimal arrangement may involve heterogeneous or variable-sized elements
2. Strong solutions often use hybrid global-local patterns
3. The optimization routine is critical - use models with carefully tuned parameters
4. Use scipy optimize, LP, or SLSQP to optimize variables given candidate structures
\end{lstlisting}
\end{ttcolorbox}

\section{Identified Programs for Open Problem Solving Tasks}
\label{app:program}

This appendix summarizes, for each open-problem benchmark, the structure of
the best program discovered by \ours{}.

\subsection{Circle Packing (Square)}
\label{app:program:cp}

The best program for the $n{=}26$ unit-square instance is a hybrid global
optimiser. It maintains both circle centres and radii as decision variables
and alternates among four ingredients: (i) a $K$-nearest-neighbour radii
projection that produces a feasible packing in closed form; (ii) an
active-set LP cutting-plane routine that solves for the maximum feasible
radii under the current contact graph, with stateful slack counters and an
annealed tightness $\tau$; (iii) a two-tier simulated-annealing search over
centre perturbations with adaptive LP lock-ins and an annealed move mix;
and (iv) periodic SLSQP micro-bursts on a focused subset of circles for
local contact tightening. Search is launched from a portfolio of
deterministic seeds, including hexagonal rows, edge rings, corner-weighted
arrangements, and spokes-plus-concentric-pentagon interiors, each with
controlled anisotropic jitter and short feasibility probing.

\subsection{Circle Packing (Rectangle)}
\label{app:program:cprr}

The best program for the $n{=}21$ rectangle-of-perimeter-$4$ instance is a
deterministic multi-start constructor with a two-stage refinement. It first
enumerates a dense grid of aspect ratios and a small set of jitter scales,
generating candidate layouts from several heterogeneous hex-like row
patterns. Each candidate is feasibility-clamped, and only the top-$K$ seeds
(ranked by sum of radii, tie-broken by the minimal pairwise slack) advance
to optimisation. Each surviving seed is then refined by a two-stage SLSQP:
stage~1 fixes the aspect ratio with tighter bounds, and stage~2 releases the
aspect ratio so all variables (centres, radii, and aspect) are co-optimised
under a shrink-only feasibility projection.

\subsection{Heilbronn (Convex)}
\label{app:program:h13}

The best program for the $n{=}13$ convex-hull instance commits to a
$C_3$-symmetric parameterisation: one centre point and four concentric
$3$-orbits (each an equilateral triangle of $3$ points at $120^{\circ}$
spacing), giving $1{+}4{\times}3=13$ points and only $8$ free parameters
(four radii via a softplus-ordered map, four phases reduced modulo
$2\pi/3$). Optimisation is a Coordinate Pattern Search with three custom
ingredients: (i) $K$-guided weighted co-participation, where the top-$K$
smallest triangles vote on which ring to perturb next; (ii) a symmetric
finite-difference $\delta v / \delta r$ sensitivity per ring with adaptive
$\epsilon$ and a per-sweep cache; and (iii) per-ring adaptive step sizes
with mild growth on acceptance and targeted shrink after repeated failures.
The objective is the exact minimum over all $\binom{13}{3}=286$ triangles,
with hull area normalised by Andrew's monotone chain plus the shoelace
formula. Search is launched from a deterministic multi-seed pool with a
short pre-refinement sweep before CPS.

\section{Potential Limitations and Broader Impacts}

\label{app:lim}

\textbf{Potential Limitations.} We acknowledge the following limitations of the paper:

(1) \ours\ requires an objective reward signal to guide the search. It has not been tested on subjective evaluation tasks where such signals are difficult to obtain, such as academic writing.

(2) The backward search relies on the policy's ability to decompose problems into meaningful sub-goals. For very weak models, this decomposition capability is limited. 

(3) Due to resource constraints, our post-training experiments use relatively small models up to 8B parameters.

\textbf{Broader Impacts.} Our work proposes a general-purpose search framework for improving the quality of LLM and agent outputs. On the positive side, \ours\ can help models achieve stronger reasoning performance, potentially reducing the need for larger models and their associated computational and environmental costs. The backward search component also improves interpretability by explicitly decomposing problems into verifiable sub-goals, making the search process more transparent. However, more effective search methods could also  enable stronger performance on tasks that could be misused. 



\end{document}